\definecolor{Ocean}{RGB}{129,194,234}
\numberwithin{equation}{section}
\def\eqref#1{equation~\ref{#1}}
\def\1{\bm{1}}
\def\vtheta{{\bm{\theta}}}
\def\vl{{\bm{l}}}
\def\vp{{\bm{p}}}
\def\vs{{\bm{s}}}
\def\vw{{\bm{w}}}
\def\vx{{\bm{x}}}
\def\vz{{\bm{z}}}
\DeclareMathAlphabet{\mathsfit}{\encodingdefault}{\sfdefault}{m}{sl}
\SetMathAlphabet{\mathsfit}{bold}{\encodingdefault}{\sfdefault}{bx}{n}
\newtheorem{theorem}{Theorem}[section]
\newtheorem{remark}{Remark}
\newtheorem{proposition}[theorem]{Proposition}
\theoremstyle{definition}
\crefname{equation}{eq.}{eqs.}
\Crefname{equation}{Eq.}{Eqs.}
\crefname{Assumption}{asp.}{asps.}
\Crefname{Assumption}{Asp.}{Asps.}
\crefname{theorem}{thm.}{thms.}
\Crefname{theorem}{Thm.}{Thms.}
\crefname{proposition}{prop.}{props.}
\Crefname{proposition}{Prop.}{Props.}
\crefname{definition}{def.}{defs.}
\Crefname{definition}{Def.}{Defs.}
\crefname{algorithm}{alg.}{algs.}
\Crefname{algorithm}{Alg.}{Algs.}
\crefname{figure}{fig.}{figs.}
\Crefname{figure}{Fig.}{Figs.}
\crefname{table}{tab.}{tabs.}
\Crefname{table}{Tab.}{Tabs.}
\crefname{section}{sec.}{secs.}
\Crefname{section}{Sec.}{Secs.}
\newcommand{\mtheta}{\bm{\Theta}}
\newcommand{\htheta}{\hat{\bm{\theta}}}
\newcommand{\bbn}{\mathbb{N}}
\newcommand{\cyan}[1]{\textcolor{cyan}{\downarrow #1}}
\newcommand{\red}[1]{\textcolor{red}{\uparrow #1}}
\title{Data-Efficient Training by Evolved Sampling}
\author{
Ziheng Cheng\thanks{University of California, Berkeley. Email: \texttt{ziheng\_cheng@berkeley.edu}}
\and Zhong Li\thanks{Microsoft Research Asia. Email: \texttt{lzhong@microsoft.com}}
\and Jiang Bian\thanks{Microsoft Research Asia. Email: \texttt{jiang.bian@microsoft.com}}
}
\date{}
\begin{document}

\maketitle

\begin{abstract}
    Data selection is designed to accelerate learning with preserved performance. 
    To achieve this, a fundamental thought is to identify informative data samples with significant contributions to the training.    
    In this work, we propose \textbf{Evolved Sampling} (\textbf{ES}), a simple yet effective framework for \emph{dynamic} sampling along the training process. 
    This method conducts \em batch \em level data selection based on the dynamics of losses and augmented \emph{loss differences}, 
    which enables flexible \emph{frequency tuning}, and hence significantly reduces the back propagation time with maintained model performance. 
    Due to its conciseness, ES is also readily extensible to incorporate \em set \em level data selection (to form ES with pruning, \textbf{ESWP}) for further accelerations. 
    As a plug-and-play framework, ES(WP) consistently achieves lossless training accelerations across various pre-training and post-training tasks, saving up to nearly 45\% wall-clock time. 
    Our results motivate further investigations on the data efficiency aspect of modern large-scale machine learning. 
\end{abstract}

\section{Introduction}
\label{sec:introduction}

Deep learning has showcased remarkable performance across a variety of real-world applications, particularly leading to unparalleled successes of large ``foundation'' models (\cite{touvron2023llama, Rombach_2022_CVPR}).  
On the other hand, since these large models are usually trained on web-scale datasets, 
the overall computation and memory loads are considerably increasing and unsustainable, calling for more \emph{efficient} developments of modern large-scale machine learning. 

Efficient learning involves several aspects, centering around models, data, optimization, systems, and so on (\cite{shen2023efficient}). 
For \emph{data}-efficient machine learning, the core is to properly evaluate the importance per data sample in the original (large) datasets. 
A broad array of methods is applied in a \emph{static} manner, or known as the offline (coreset) selection, 
where the samples' importance is determined before the formal training. 
By leveraging feature representations of data (\cite{swayamdipta-etal-2020-dataset, NEURIPS2023_6b9aa8f4}), this importance can be either evaluated based on a variety of metrics such as distances (\cite{huang2023nearoptimal,xia2023moderate, abbas2023semdedup}), uncertainties (\cite{Coleman2020Selection, margatina-etal-2021-active}), errors (\cite{toneva2018an, NEURIPS2021_ac56f8fe}), etc., 
or learned via procedures from the meta optimization (\cite{NEURIPS2021_793bc52a, Killamsetty_Sivasubramanian_Ramakrishnan_Iyer_2021, jain2024improving, Wang_2022_CVPR}) and dataset distillation (\cite{NEURIPS2021_299a23a2, Wang_2022_CVPR, Zhao_2023_WACV}), 
or directly assessed by LLMs (\cite{sachdeva2024train}). 
See more detailed discussions in Appendix \ref{sec:related_work}. 
However, these approaches can be prohibitively expensive to apply in practice, 
since their potential dependence on feature representations requires additional (pre-)training in advance. 

Another array of methods lies in a \emph{dynamic} sense, 
or known as the online (batch) selection, 
where the samples' importance is simultaneously evaluated along the training process.
Dynamic sampling methods can be further divided into two categories: \em set \em level selection, to prune the whole dataset at the beginning of each epoch (\cite{qin2024infobatch, raju2021accelerating, KA, attendu-corbeil-2023-nlu}), and \em batch \em level selection, to sample subsets from original batches for back propagation (\cite{pmlr-v108-kawaguchi20a, katharopoulos2017biased, pmlr-v80-katharopoulos18a, pmlr-v162-mindermann22a}). 
Nevertheless, these dynamic sampling methods leverage similar strategies to evaluate the samples' importance.
Based on the naive intuition that samples' contributions to the learning are directly associated with gradient updates, 
it is natural to re-weight data samples with scales of gradients or losses during training. 
Sampling methods based on the gradients (\cite{pmlr-v162-hanchi22a, wang2024efficient, gu2025data, wang2025capturing, NEURIPS2024_ed165f2f}) usually suffer from significant computation and memory loads. 
Sampling methods based on the loss dynamics can involve current losses (\cite{jiang2019accelerating, losh2016online, qin2024infobatch, KA, kumar2023stochastic, 10459767}) and historical losses (\cite{attendu-corbeil-2023-nlu, raju2021accelerating, Sagawa2020Distributionally}) 
and also adopt reference models (\cite{pmlr-v162-mindermann22a, NEURIPS2023_1af3e0bf, NEURIPS2023_dcba6be9}). 
See more detailed discussions in Appendix \ref{sec:related_work}. 
However, these approaches exploit the information of losses inadequately by only involving ``absolute'' loss values, without finer considerations on their dynamical ``variations'' during training. 

To tackle these issues, we propose a simple novel dynamic sampling framework, \textbf{Evolved Sampling} (\textbf{ES}). 
Unlike previous sampling methods, ES determines the importance/weights of data samples based on both (zero-order) losses and additional (first-order) loss \emph{differences} along the training dynamics. 
By augmenting and balancing these two orders, ES can \textbf{flexibly tune the portion of oscillations (high frequencies) presented in loss signals}, and conducts \em batch \em level selection without the demand of pre-trained reference models. 
Importantly, ES employs an equivalent dynamical scheme to compute sampling weights \emph{without explicitly storing historical losses}, 
and \emph{only computations regarding losses are involved} to \emph{implicitly calculate the required loss differences}, 
implying the negligible memory costs and mild computation overhead additionally introduced by weight calculations. 
Due to its simplicity, ES is effortless to implement, while significantly reducing the number of samples used for back propagations (BPs) and consequently saving the overall wall-clock time without degrading the overall performance. 
Moreover, ES facilitates convenient extensions to data pruning on the \em set \em level, i.e., \textbf{Evolved Sampling with Pruning} (\textbf{ESWP}), leading to further accelerations with lossless learning performance. 
We demonstrate the differences in details between our proposed methods (ES/ESWP) and previous dynamic sampling methods in \Cref{tab:compare}. 

\begin{table}[ht]
    \caption{Comparison of different dynamic sampling methods. 
    The ``history'' denotes whether the method uses historical (loss) information along the training. 
    The ``dif'' column stands for whether the method uses dynamical variations of losses during the training. 
    The last column summarizes the ratio of samples used for back propagations (BPs) relative to the standard training. 
    Here, $r$ stands for the pruning ratio for \emph{set} level methods (pruning data samples of the whole epoch), 
    and $b/B$ represents the pruning ratio for \emph{batch} level methods (selecting a mini-batch $\mathfrak{b}$ (subset) from a meta-batch $\mathcal{B}$).}
    \label{tab:compare}
    \centering
    \setlength{\tabcolsep}{3pt}
    \resizebox{0.8\linewidth}{!}{
    \begin{tabular}{c|c|c|c|c|c|c}
    \toprule
    & \em set \em & \em batch \em & history & dif & \# of samples for BP \\
    \midrule
    UCB (\cite{raju2021accelerating}) & \checkmark & & \checkmark & & $1-r$ \\
    KA (\cite{KA}) & \checkmark & & & & $1-r$ \\  
    InfoBatch (\cite{qin2024infobatch}) & \checkmark & & & & $1-r$ \\  
    Loss (\cite{katharopoulos2017biased}) & & \checkmark & & & $b/B$ \\  
    Order (\cite{pmlr-v108-kawaguchi20a}) & & \checkmark & & & $b/B$ \\  
    \textbf{ES (ours)} &  & \checkmark & \checkmark & \checkmark & $b/B$ \\
    \textbf{ESWP (ours)} & \checkmark & \checkmark & \checkmark & \checkmark & $\bm{(1-r)b/B}$ \\
    \bottomrule
    \end{tabular}
    }
\end{table}

Our contributions can be summarized as follows: 
\begin{itemize}[leftmargin=2em]
    \item On the theoretical side, we provide quantitative convergence analysis of the loss re-weighted gradient descent (GD) under idealized settings. 
    Motivated by this, we propose a simple novel dynamic sampling framework ES(WP) that can \emph{implicitly} incorporate (and balance) additional dynamical \emph{differences} of losses \emph{without explicitly storing historical values and calculating variations}. 
    By further injecting higher-order dynamical information, one can flexibly tune the portion of oscillations (high frequencies) presented in loss signals with quantitative guidance. 
    \item On the empirical side, we carry out extensive experiments to verify the effectiveness, efficiency, and flexibility of ES(WP).  
    It is shown that ES(WP) consistently achieves lossless training accelerations across various pre-training and post-training tasks, saving up to 45\% training time.  
\end{itemize}

The rest of this paper is organized as follows.
In \Cref{sec:preliminary}, we provide the motivation of loss-based dynamic sampling methods.
In \Cref{sec:method}, we present the proposed methods with theoretical justifications and complexity analysis. 
Experiments and ablation studies are provided in \Cref{sec:experiments}. 
The discussions and outlook are provided in \Cref{sec:conclusion}. 
Related works and all the details of proofs and experiments are in the appendices.

\paragraph{Notations.}
We use normal letters to denote scalars,  
and boldfaced lower-case letters for vectors. 
We denote the cardinality of a set $S$ by $|S|$. 
Let \([n] := \{1, 2, \ldots, n\}\) for \(n \in \mathbb{N}_+\). 
Let \(\bm{1}_n \in \mathbb{R}^n\) be the vector of all ones. 
$\lceil c \rceil$ represents the smallest positive integer such that $\lceil c \rceil \ge c$. 
We use the big-O notation \(f(t) = O(g(t))\) to represent that \(f\) is bounded above by \(g\) asymptotically, i.e., there exists a universal \(c > 0, t_0 >0\) such that \(f(t) \le c g(t)\) for any \(t \ge t_0\).

\section{Preliminaries and Motivations}
\label{sec:preliminary}

\subsection{Preliminaries}

The classic setting of general machine learning tasks is as follows. 
Given a dataset $\mathcal{D}:=\{\vz_i\}_{i=1}^n$ with $\vz_i:=(\vx_i,y_i)$ (labeled) or $\vz_i:=\vx_i$ (unlabeled) of the size $n\in\bbn_+$, the goal is to solve the empirical risk minimization (ERM) problem: 
$\min\limits_{\vtheta \in \mtheta}
\hat{L}_n (\vtheta)
:= \frac{1}{n} \sum_{i=1}^n \ell_i(\vtheta)
$, 
where $\ell_i(\vtheta) := \ell(f(\vx_i;\vtheta), y_i)$ or $\ell_i(\vtheta) := \ell(f(\vx_i;\vtheta))$. 
Here, $\ell(\cdot,\cdot)$ or $\ell(\cdot)$ denotes the non-negative loss function, 
and $\hat{L}_n (\vtheta)$ represents the empirical averaged loss over $n$ data samples. 
When $n$ is large, a common routine is to compute stochastic gradient on a random batch instead of the whole training set. 
For instance, starting from an initialization $\vtheta(0)=\vtheta_0$, the SGD optimizer updates model parameters by 
$\vtheta(t+1) 
= \vtheta(t)-\eta_t \nabla_{\vtheta} \hat{L}_n (\vtheta(t)) 
\approx 
\vtheta(t)-\frac{\eta_t}{B} \sum_{j=1}^B \nabla_{\vtheta} \ell_{i_j}(\vtheta(t))
$, 
where $\{\eta_t\}_{t\in \bbn}$ denotes learning rates, and $B\le n$ is the batch size. 
The standard sampling method is to draw the batch $\{\vz_{i_j}\}_{j=1}^B 
\subset \mathcal{D}$ uniformly without replacement for $\lceil n/B \rceil$ iterations in one epoch, 
which we refer as the standard batched sampling (baseline, no data selection). 

\subsection{Theoretical Motivations}

Obviously, the standard batched sampling takes equal treatment to data samples. 
This can be \emph{inefficient} since different  samples may have varied importance to the learning task at different training stages: 
As the training proceeds, there are inevitably  samples that are fitted more accurately compared with the others, leading to lower priority to learn these better-fitted samples in the sequel. 
Hence, it is necessary to assign \emph{adaptive} weights for data samples during training. 

\paragraph{Convergence of loss re-weighted GD.}

As discussed before, it is intuitively reasonable to measure the samples' importance with scales of losses along the training, putting more weights on samples with larger losses. 
The experiments in 
\cite{katharopoulos2017biased} and \cite{pmlr-v108-kawaguchi20a} have suggested that this kind of ``loss-weighted'' gradient decent dynamics can accelerate learning in practice compared to vanilla GD (without  data re-weighting). 
To step further, this work develops these former literatures in theory by first mathematically proving the following convergence rate. 

\begin{proposition}[Reduced version; see a full version in Proposition \ref{prop:lwgf>gf_formal}]\label{prop:lwgf>gf}
    Consider the continuous-time idealization of the loss-weighted gradient decent, i.e. 
    \begin{align}\label{eq:lwgf}
        \frac{\mathrm{d}}{\mathrm{d}s} \htheta_n^{\text{lw}} (s)
        = - \sum_{i=1}^n 
        \frac{\ell_i (\htheta_n^{\text{lw}} (s))}
        {\sum_{j=1}^n \ell_j (\htheta_n^{\text{lw}} (s))} \nabla_{\vtheta} \ell_i (\htheta_n^{\text{lw}} (s)), 
    \end{align} 
    with the initialization $\htheta_n^{\text{lw}} (0)=\vtheta_0$. 
    Assume that there exists $\vtheta^*$ such that $\hat{L}_n (\vtheta^*)=0$ and $\ell_i(\cdot)$ is convex for each $i\in [n]$. 
    Then, we have the more-than sub-linear convergence rate of (\Cref{eq:lwgf}): 
    \begin{align}
        \hat{L}_n (\htheta_n ^{\text{lw}}(s_0))  
        -\hat{L}_n(\vtheta^*) 
        \le \frac{1}{2s} \|\vtheta_0-\vtheta^*\|_2^2 
        - \frac{1}{s} \int_{0}^{s} \Delta(s') \mathrm{d}s', \quad s>0,~s_0\in[0, s],  
    \end{align} 
    where $\Delta (\cdot)$ is a positive-valued function on $[0,\infty)$.  
\end{proposition}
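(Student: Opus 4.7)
\noindent
The plan is to mimic the classical convex gradient-flow analysis via the Lyapunov function $V(s) := \tfrac{1}{2}\|\hat{\bm{\theta}}_n^{\text{lw}}(s) - \bm{\theta}^*\|_2^2$ while carefully exploiting that the loss-weighted flow reshapes the vanilla velocity into a convex combination tilted by the losses themselves. The crux is that this re-weighting produces a strictly tighter one-step decrease than plain gradient flow, and the slack is exactly what the mysterious term $\Delta(\cdot)$ will capture.

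\textbf{Step 1 (differentiate the Lyapunov function).} I would compute
\begin{equation*}
\dot{V}(s) = \Big\langle \hat{\bm{\theta}}_n^{\text{lw}}(s) - \bm{\theta}^*,\ -\sum_{i=1}^n \frac{\ell_i}{\sum_j \ell_j}\,\nabla_{\bm{\theta}}\ell_i\Big\rangle,
\end{equation*}
and, using convexity of each $\ell_i$ together with $\ell_i(\bm{\theta}^*) = 0$ (which follows from $\hat{L}_n(\bm{\theta}^*)=0$ and non-negativity of the losses), replace $\langle \nabla \ell_i,\hat{\bm{\theta}}_n^{\text{lw}} - \bm{\theta}^*\rangle$ by the lower bound $\ell_i(\hat{\bm{\theta}}_n^{\text{lw}}(s))$. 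This yields
\begin{equation*}
\dot{V}(s) \le -\frac{\sum_{i=1}^n \ell_i(\hat{\bm{\theta}}_n^{\text{lw}}(s))^2}{\sum_{j=1}^n \ell_j(\hat{\bm{\theta}}_n^{\text{lw}}(s))}.
\end{equation*}

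\textbf{Step 2 (extract the extra slack via a variance identity).} I would rewrite the right-hand side using $\sum_i \ell_i^2 = n\,\hat{L}_n^2 + \sum_i(\ell_i - \hat{L}_n)^2$, which gives
\begin{equation*}
\dot{V}(s) \le -\hat{L}_n(\hat{\bm{\theta}}_n^{\text{lw}}(s)) - \Delta(s),\qquad \Delta(s) := \frac{1}{n\,\hat{L}_n(\hat{\bm{\theta}}_n^{\text{lw}}(s))}\sum_{i=1}^n\bigl(\ell_i(\hat{\bm{\theta}}_n^{\text{lw}}(s)) - \hat{L}_n(\hat{\bm{\theta}}_n^{\text{lw}}(s))\bigr)^2,
\end{equation*}
which is manifestly non-negative and vanishes only when the losses are instantaneously balanced. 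This $\Delta$ is precisely the ``more-than sub-linear'' improvement over vanilla GD, where the standard Cauchy--Schwarz step would only deliver the $\hat{L}_n$ term.

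\textbf{Step 3 (integrate and pick the relevant iterate).} Integrating from $0$ to $s$ yields
\begin{equation*}
\int_0^s \hat{L}_n(\hat{\bm{\theta}}_n^{\text{lw}}(s'))\,\mathrm{d}s' + \int_0^s \Delta(s')\,\mathrm{d}s' \le V(0) - V(s) \le \tfrac{1}{2}\|\bm{\theta}_0 - \bm{\theta}^*\|_2^2.
\end{equation*}
Choosing $s_0 \in \arg\min_{s' \in [0,s]} \hat{L}_n(\hat{\bm{\theta}}_n^{\text{lw}}(s'))$ (or, equivalently, invoking Jensen on the convex $\hat{L}_n$ applied to the time-average of the trajectory) bounds $\hat{L}_n(\hat{\bm{\theta}}_n^{\text{lw}}(s_0))$ by the time-average $\tfrac{1}{s}\int_0^s \hat{L}_n\,\mathrm{d}s'$, delivering the advertised inequality after dividing by $s$.

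\textbf{Anticipated obstacle.} The analytic steps are routine once convexity and the variance identity are combined; the real subtlety lies in interpreting/justifying the index $s_0$. Since $\hat{L}_n$ need not be monotone along the loss-weighted flow (differentiating shows $\tfrac{d}{ds}\hat{L}_n$ is a bilinear form in $\sum_i \nabla\ell_i$ and $\sum_i \ell_i\nabla\ell_i$ with no definite sign), one cannot simply take $s_0 = s$; instead, one must either read $s_0$ as a best-iterate minimizer or pass through the time-averaged parameter. A secondary technicality is ruling out blow-up of $\Delta$ when $\hat{L}_n \to 0$, which I would handle by noting that the product $\hat{L}_n \cdot \Delta = \tfrac{1}{n}\sum_i (\ell_i - \hat{L}_n)^2$ remains finite and that the integral bound degenerates gracefully in that regime.
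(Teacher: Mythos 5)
Your proof is correct, and while it shares the same skeleton as the paper's argument (differentiate the squared distance to $\vtheta^*$, apply convexity of each $\ell_i$ with $\ell_i(\vtheta^*)=0$, integrate, then pass to a best iterate or Jensen), it handles the key step --- extracting the extra decrease beyond $-\hat{L}_n$ --- by a genuinely different and cleaner route. The paper compares the weighted velocity against the uniform one, splits the resulting discrepancy into terms $T_1,T_2$, and lower-bounds $T_1$ by partitioning indices into above- and below-average loss sets $I^+, I^-$, yielding $\Delta(s)=\delta(s)\sum_{i\in I^+}\bigl(\ell_i/\sum_j\ell_j - 1/n\bigr)$ with $\delta(s)$ the gap between the extremal losses in each set. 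Your variance identity instead evaluates that same quantity \emph{exactly}: one checks that the paper's $T_1\big|_{\vtheta=\vtheta^*}=\sum_i\bigl(\ell_i/\sum_j\ell_j-1/n\bigr)\ell_i = \frac{1}{n\hat{L}_n}\sum_i(\ell_i-\hat{L}_n)^2$, which is precisely your $\Delta(s)$. So your bound is tighter (the paper's $\Delta$ is a further lower bound on yours), the derivation is shorter, and the slack acquires a transparent interpretation as the loss variance normalized by the mean --- the weighted flow beats vanilla gradient flow exactly when the per-sample losses are imbalanced. Two small points in your favor on the technicalities: your worry about $\Delta$ blowing up is resolved by $\Delta(s)=\frac{\sum_i\ell_i^2}{n\hat{L}_n}-\hat{L}_n\le \max_i\ell_i-\hat{L}_n$, so no degeneracy occurs; and your best-iterate reading of $s_0$ is the safer one, since the paper's footnote invokes a mean-value property for the vector-valued average $\frac{1}{s}\int_0^s\htheta_n^{\text{lw}}(s')\,\mathrm{d}s'$ lying on the trajectory, which is not guaranteed in general, whereas $\min_{s'\in[0,s]}\hat{L}_n(\htheta_n^{\text{lw}}(s'))\le\frac{1}{s}\int_0^s\hat{L}_n(\htheta_n^{\text{lw}}(s'))\,\mathrm{d}s'$ needs no such claim. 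The only caveat shared with the paper is that strict positivity of $\Delta$ fails at instants where all losses coincide; like the paper, you note this degenerate case explicitly.
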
 

Proposition \ref{prop:lwgf>gf} suggests that (under certain regularity conditions) the loss-weighted gradient flow converges more than sub-linearly to the global minimum, while the standard gradient flow (i.e. the continuous-time idealization of vanilla GD) only has the sub-linear convergence.\footnote{Although this sharper convergence bound cannot imply learning accelerations solely in theory, accelerations are often observed in practical simulations (e.g. Table 1, 3 and Figure 3, 4 in \cite{pmlr-v108-kawaguchi20a}).} 

To formulate, for any $i \in [n]$ and $t \in \bbn$,  define $w_i(t)$ as the (unnormalized) weight of the $i$-th sample at the $t$-th (training) step. 
For the standard batched sampling, we obviously have the uniform weights: $w_i(t)\equiv 1/n$. 
For the loss-weighted sampling \Cref{eq:lwgf}, one calculates the sampling probability as 
\begin{align}\label{eq:ema0_weight}
    p_i(t) \propto w_i(t) = \ell_i(\vtheta(t)),  
\end{align}
i.e., the weight is set as the current loss value. 
On top of that, there are also some variants of loss-weighted sampling strategies:  
For instance, \cite{kumar2023stochastic} sets $w_i(t) = g(\ell_i(\vtheta(t)))$, where the function $g(\cdot)$ is pre-defined based on the theory of robust optimization;
\cite{pmlr-v108-kawaguchi20a} directly selects top-$q$ samples in terms of current losses per training step, which can be regarded as another realization of \cite{kumar2023stochastic}. 

\section{Methods and Analysis}\label{sec:method}
\subsection{Evolved Sampling}

In general machine learning tasks, the typical behaviors of averaged losses often appear decent trends overall, but can oscillate meanwhile due to the noises in training dynamics. 
This introduces the \emph{instability} issue of sampling schemes (e.g. \Cref{eq:ema0_weight}) applied in practice, 
i.e., the loss-weighted sampling scheme like \Cref{eq:ema0_weight} is intrinsically \emph{sensitive} to possibly large \emph{variations} of (individual) losses and not robust to possible noises. 
In this regard, it is natural to further incorporate higher-order information in the loss dynamics to derive more stable sampling, such as the first-order derivatives of losses $\frac{\mathrm{d}}{\mathrm{d}t} \ell_i(\vtheta(t))$, or the corresponding discretizations (i.e. ``differences'' of losses) $\ell_i(\vtheta(t+1))-\ell_i(\vtheta(t))$ for any $i \in [n]$ and $t \in \bbn$. 
Since this requires additional storage for historical losses per sample, 
one can \emph{implicitly} achieve the same goal e.g.  based on the following proposition. 

\begin{proposition}\label{prop:2-ema_dif}
    For any $i \in [n]$ and $t \in \bbn$, define the sampling probability as 
    \begin{equation}\label{eq:ema2}
    \begin{split}
        p_i(t) \propto w_i(t) 
        &= \beta_1 s_i(t-1)+(1-\beta_1)\ell_i(\vtheta(t)), \\
        s_i(t) &= \beta_2 s_i(t-1)+(1-\beta_2)\ell_i(\vtheta(t)) 
    \end{split}
    \end{equation}
    with $s_i(0) = 1/n$, and $\beta_1, \beta_2 \in [0,1]$ as two hyper-parameters.  
    Then for any $\beta_2 \ne 1$, we have 
    \begin{equation}
        w_i(t)
        = 
        (1-\beta_2) \sum_{k=1}^t \beta_2^{t-k} \ell_i(\vtheta(k)) 
        + (\beta_2-\beta_1)
        \sum_{k=1}^{t-1} \beta_2^{t-1-k} 
        \underbrace{(\ell_i(\vtheta(k+1)) -\ell_i(\vtheta(k)))}_{\text{losses'~\emph{dynamical~differences}}}+O(\beta_2^{t}). \label{eq:conv_expand}
    \end{equation} 
\end{proposition}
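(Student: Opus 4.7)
The plan is to unroll the two-term recursion and then regroup the resulting sum so that the losses' dynamical differences appear explicitly. Because there are no probabilistic or analytic subtleties here — the statement is purely an algebraic identity (modulo an $O(\beta_2^t)$ tail from the initial condition) — the proof is going to reduce to careful bookkeeping of coefficients.

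First, I would unroll the auxiliary recursion $s_i(t) = \beta_2 s_i(t-1) + (1-\beta_2)\ell_i(\vtheta(t))$ by a straightforward induction on $t$, obtaining the closed form
\begin{equation*}
    s_i(t) \;=\; (1-\beta_2)\sum_{k=1}^{t}\beta_2^{t-k}\,\ell_i(\vtheta(k)) \;+\; \beta_2^{t} s_i(0).
\end{equation*}
Since $s_i(0) = 1/n$, the trailing term is exactly of order $O(\beta_2^t)$. Plugging the analogous expression for $s_i(t-1)$ into $w_i(t) = \beta_1 s_i(t-1) + (1-\beta_1)\ell_i(\vtheta(t))$ then yields
\begin{equation*}
    w_i(t) \;=\; \beta_1(1-\beta_2)\sum_{k=1}^{t-1}\beta_2^{t-1-k}\,\ell_i(\vtheta(k)) \;+\; (1-\beta_1)\ell_i(\vtheta(t)) \;+\; O(\beta_2^{t}).
\end{equation*}

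Second, I would bring this raw sum into the claimed form by summation-by-parts (equivalently, a shift of the summation index). Writing the differences sum as
\begin{equation*}
    \sum_{k=1}^{t-1}\beta_2^{t-1-k}\bigl(\ell_i(\vtheta(k+1))-\ell_i(\vtheta(k))\bigr) \;=\; \sum_{k=2}^{t}\beta_2^{t-k}\,\ell_i(\vtheta(k)) \;-\; \sum_{k=1}^{t-1}\beta_2^{t-1-k}\,\ell_i(\vtheta(k)),
\end{equation*}
I can add the target expression on the right-hand side of \eqref{eq:conv_expand} and collect the coefficient of each $\ell_i(\vtheta(k))$. For $k=t$ the coefficient reduces to $1-\beta_1$; for $2 \le k \le t-1$ the coefficient collapses to $\beta_1(1-\beta_2)\beta_2^{t-1-k}$, matching the raw expansion above; and for $k=1$ the two expressions disagree only by a term of the form $(\beta_1-\beta_2)\beta_2^{t-1}/\text{const}$, which is absorbed into the $O(\beta_2^t)$ remainder.

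The only step that requires some care is verifying that the endpoint discrepancy at $k=1$ really is $O(\beta_2^t)$ — this is what forces the explicit error term in the statement, since the identity would otherwise be exact only if $s_i(0)$ were chosen to match a particular initial loss convention. Everything else is bookkeeping: the induction for $s_i(t)$ is one line, and the regrouping is a single application of summation by parts. I would present it compactly by (i) stating the closed form for $s_i(t)$, (ii) substituting into $w_i(t)$, and (iii) verifying the coefficient match term-by-term, rather than trying to massage one side into the other in a single algebraic manipulation.
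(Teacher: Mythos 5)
Your proof is correct and follows essentially the same route as the paper's: both unroll the recursion for $s_i(t)$ into the closed form $(1-\beta_2)\sum_{k=1}^{t}\beta_2^{t-k}\ell_i(\vtheta(k))+\beta_2^{t}s_i(0)$ and then recover the loss-difference sum by an index shift (summation by parts), absorbing the initial-condition and $k=1$ endpoint terms into $O(\beta_2^t)$. The only difference is presentational — the paper first rewrites $w_i(t)=s_i(t)+\frac{\beta_2-\beta_1}{1-\beta_2}(s_i(t)-s_i(t-1))$ and applies the index shift to $s_i(t)-s_i(t-1)$, whereas you expand $w_i(t)$ directly and match coefficients of each $\ell_i(\vtheta(k))$.
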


The proof is deferred to Appendix \ref{app:2-ema_dif}. 
We discuss the implications of Proposition \ref{prop:2-ema_dif} as follows:  
\begin{itemize}[leftmargin=2em]
    \item The sampling scheme \Cref{eq:ema2} reduces to  \Cref{eq:ema0_weight} when setting $\beta_1=\beta_2=0$,\footnote{Also, it is obvious that  \Cref{eq:ema2} reduces to the standard batched sampling when setting $\beta_1=\beta_2=1$.} 
    hence it is an extension by augmenting the information of losses' dynamical differences. 
    \item Proposition \ref{prop:2-ema_dif} suggests that one can incorporate additional dynamical variations of losses into the calculation of sampling weights through \Cref{eq:ema2}, 
    \emph{without explicitly storing historical losses and calculating differences} (as in \Cref{eq:conv_expand}), making \Cref{eq:ema2} an efficient sampling scheme by saving both memory and computation compared to \Cref{eq:conv_expand}. 
    \item Based on \Cref{eq:conv_expand}, the strengths of losses and their dynamical differences can be flexibly balanced via the hyper-parameters $(\beta_1, \beta_2)$, 
    leading to improved stability of  \Cref{eq:ema2} under oscillatory loss dynamics. 
    This intuition is illustrated in \Cref{fig:2-ema_vs_1-ema_vs_0-ema}. 
    By setting $(\beta_1, \beta_2) \to (0^+, 1^-)$, we are able to exploit the long-term historical information during training (via $\beta_2$), 
    while focusing on the importance of current losses (via $\beta_1$) and thus can get the best of both world.\footnote{In fact, by \Cref{eq:ema2}, it is obvious that smaller $\beta_1$ and larger $\beta_2$ give larger coefficients of the current loss $\ell_i(\vtheta(t))$ and historical score $s_i(t-1)$, respectively,  
    hence we are focusing on the importance of both current losses and historical weights by setting $(\beta_1, \beta_2) \to (0^+, 1^-)$.}
\end{itemize}

\begin{figure}[ht] 
    \centering
    \begin{minipage}{0.5\linewidth} 
        \centering
        \includegraphics[width=0.75\linewidth]{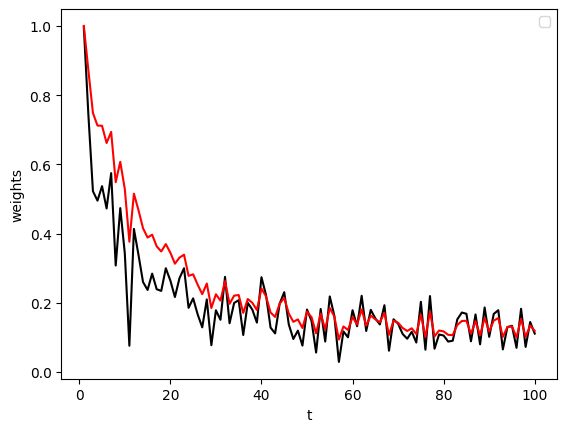}
        \captionof{figure}{The output weights of different sampling schemes, 
        where the black curve denotes \Cref{eq:ema0_weight}, 
        while the red curve represents \Cref{eq:ema2} ($(\beta_1, \beta_2)= (0.5, 0.9)$). 
        Here, we draw the black curve as a decayed function with random perturbations, to mimic typical behaviors of loss curves in general machine learning. 
        It is shown that the sampling scheme \Cref{eq:ema0_weight} is sensitive w.r.t. oscillations. 
        However, when losses oscillate, the sampling scheme \Cref{eq:ema2} reacts moderately by not only reserving some portion of dynamical details of losses (high frequency information), but also remaining necessary robustness by capturing the overall trend (low frequency information), 
        with the flexibility to trade off in between by tuning $(\beta_1, \beta_2)$. 
        See theoretical analysis in  \Cref{sec:freq_anal}.}
        \label{fig:2-ema_vs_1-ema_vs_0-ema}
    \end{minipage}\hfill 
    \begin{minipage}{0.45\linewidth} 
        \centering
        \includegraphics[width=\linewidth]{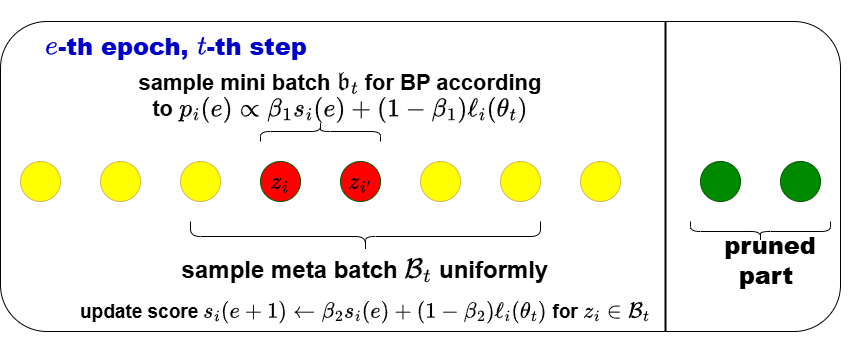}
        \captionof{figure}{An illustration of ES(WP). 
        At the beginning of the $e$-th epoch, we optionally randomly prune the whole dataset (``pruning''), \emph{according to the probability proportional to the weights $\{w_i(e)\}_{i=1}^n$ defined in \Cref{eq:ema2}}.  
        At the $t$-th step, we first sample a meta-batch $\mathcal{B}_t$ uniformly without replacement from the remaining dataset; 
        then we sample a mini-batch $\mathfrak{b}_t$ from $\mathcal{B}_t$ for BP, 
        according to the sampling probability $p_i(\cdot)$ defined in \Cref{eq:ema2}. 
        Note that the scores/weights of samples are updated using the \emph{latest} model parameters. 
        At the first/last few epochs, we optionally use the ``annealing'' strategy (\cite{qin2024infobatch}), i.e. the standard batched sampling without data selection. 
        See algorithm details in Appendix \ref{app:sec:alg}.}
        \label{fig:alg_illustration}
    \end{minipage}
\end{figure}

\paragraph{Intuitions.}

We explain why augmented (loss) differences should work intuitively. 
Let $\beta_2>\beta_1$. 
Given any data sample $\vz_i$, if its total loss variations accumulated up to $t$ are positive (say, $\ell_i(\cdot)$ always increases), 
the augmented ``difference'' term in \Cref{eq:conv_expand} is positive 
and hence its sampling weight is increased, 
which is reasonable since the model continually underfits $\vz_i$ and should then value $\vz_i$ more. 
Conversely, if its loss variations accumulated up to $t$ are negative (say, $\ell_i(\cdot)$ always decreases), 
the augmented ``difference'' term in \Cref{eq:conv_expand} is negative  
and hence its sampling weight is decreased, 
which is also reasonable since the model continually fits $\vz_i$ well and should then value $\vz_i$ less. 
That is, the augmented ``difference'' term in \Cref{eq:conv_expand} plays a role of ``damping'' to stabilize the data re-weighting. 
More quantitative justifications can be derived via the frequency analysis (see \Cref{sec:freq_anal}).

\paragraph{ES(WP) framework.}

We refer the scheme \Cref{eq:ema2} as Evolved Sampling (ES), which conducts data selection on batch level.
To further incorporate set level selection, we extend ES to prune data at each epoch, leading to \textbf{E}volved \textbf{S}ampling \textbf{W}ith \textbf{P}runing (ESWP) framework as illustrated in \Cref{fig:alg_illustration}.
Note that we optionally adopt annealing techniques to enhance performance.
For the essential differences between ES(WP) and previous dynamic sampling methods, one can refer to the taxonomy outlined in \Cref{tab:compare}. 
As a plug-and-play framework, ES(WP) can be integrated into any optimizers applied to different tasks, while some recently developed sampling methods (\cite{NEURIPS2024_ed165f2f, wang2025capturing}) only work for SGD. 
In practice, the simple and elegant design of the sampling scheme \Cref{eq:ema2} turns out to be surprisingly effective, as shown in extensive experiments.

\begin{remark}
    Here, we allow the randomness to keep samples with lower weights in training, which reduces the biases (and possibly inactive samples) compared to directly discarding them.
\end{remark} 

\subsection{Theoretical Benefits of Evolved Sampling via Fourier Analysis}\label{sec:freq_anal}

In this section, we provide mathematical justifications for the sampling scheme \Cref{eq:ema2} by characterizing its frequency properties. 
To achieve this, we first view $\ell(t)(:=\ell(\vtheta(t))),\,s(t),\,w(t)$ defined in the sampling scheme \Cref{eq:ema2} all as signals in time. For ease of notation, we omit the sample index $i$ here. 
For a signal in time $f(t)$ (with appropriate regularities), we consider the Laplace transform $\mathcal{L}\{f\}(\omega)=\int_0^\infty e^{-\omega t} f(t) \mathrm{d}t$, $\omega \in \mathbb{C}$. 
Then, according to the Fourier analysis, $|\mathcal{L}\{f\}(\mathrm{i}\omega_0)|$ represents the magnitude of $f$'s $\omega_0$-frequency for $\omega_0>0$ ($\mathrm{i}^2=-1$). 
We have the following result. 

\begin{theorem}\label{thm:freq}
    Consider a continuous-time idealization of the sampling scheme \Cref{eq:ema2}: 
    \begin{align}
        w(t) = s(t)+\frac{\beta_2-\beta_1}{1-\beta_2} s'(t),\qquad 
        s'(t) = (1-\beta_2)(\ell(t)-s(t)),  
    \end{align}
     with $s(0) = 1/n$, and $\beta_1, \beta_2 \in (0,1)$ as two hyper-parameters. 
     Then we have 
     \begin{align}
         \mathcal{L}\{w\}(\omega) 
         = \frac{(\beta_2-\beta_1)\omega+(1-\beta_2)}{\omega+(1-\beta_2)}\mathcal{L}\{\ell\}(\omega) + O(1/n), 
     \end{align}
     implying that the transfer function $H(\omega)=\frac{(\beta_2-\beta_1)\omega+(1-\beta_2)}{\omega+(1-\beta_2)}$ satisfies 
     \begin{align}
        |H(\mathrm{i}\omega_0)|\le 1, \quad 
        \lim_{\omega_0\to+\infty} |H(\mathrm{i}\omega_0)| = |\beta_2-\beta_1|.
    \end{align}
\end{theorem}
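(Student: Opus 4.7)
The plan is to treat the system as a pair of linear ODEs with a known initial condition, then apply the Laplace transform to convert it into an algebraic relation between $\mathcal{L}\{\ell\}(\omega)$, $\mathcal{L}\{s\}(\omega)$, and $\mathcal{L}\{w\}(\omega)$. The only piece of data that prevents the transfer function from being exact is the initial value $s(0)=1/n$; tracking where this term goes will give the $O(1/n)$ remainder.

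First I would apply $\mathcal{L}$ to the second ODE $s'(t)=(1-\beta_2)(\ell(t)-s(t))$, using the standard identity $\mathcal{L}\{s'\}(\omega)=\omega\mathcal{L}\{s\}(\omega)-s(0)$ with $s(0)=1/n$. Solving for $\mathcal{L}\{s\}$ gives
\begin{equation*}
\mathcal{L}\{s\}(\omega) \;=\; \frac{(1-\beta_2)\mathcal{L}\{\ell\}(\omega)+1/n}{\omega+(1-\beta_2)}.
\end{equation*}
Next, I would take the Laplace transform of $w(t)=s(t)+\frac{\beta_2-\beta_1}{1-\beta_2}s'(t)$, substitute the formula above, and collect the $\mathcal{L}\{\ell\}$ coefficient together. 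A short algebraic simplification should produce exactly the claimed transfer function $H(\omega)=\frac{(\beta_2-\beta_1)\omega+(1-\beta_2)}{\omega+(1-\beta_2)}$ multiplying $\mathcal{L}\{\ell\}(\omega)$, plus a residual coming from the $1/n$ and the $-s(0)$ terms. Combining these residual contributions over a common denominator, I expect the cancellation
\begin{equation*}
\frac{(\beta_2-\beta_1)\omega+(1-\beta_2)}{(1-\beta_2)\,n\,(\omega+(1-\beta_2))} \;-\; \frac{\beta_2-\beta_1}{(1-\beta_2)\,n} \;=\; \frac{1-\beta_2+\beta_1}{n\,(\omega+(1-\beta_2))},
\end{equation*}
which is manifestly $O(1/n)$ for any fixed $\omega$ bounded away from $-(1-\beta_2)$; this pins down the error term.

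For the bound on $|H(\mathrm{i}\omega_0)|$, I would just compute the squared modulus directly, using $|a+\mathrm{i}b|^2=a^2+b^2$:
\begin{equation*}
|H(\mathrm{i}\omega_0)|^2 \;=\; \frac{(\beta_2-\beta_1)^2\omega_0^2+(1-\beta_2)^2}{\omega_0^2+(1-\beta_2)^2}.
\end{equation*}
Since $\beta_1,\beta_2\in(0,1)$ forces $|\beta_2-\beta_1|<1$, the numerator is bounded by the denominator, giving $|H(\mathrm{i}\omega_0)|\le 1$. Dividing numerator and denominator by $\omega_0^2$ and sending $\omega_0\to+\infty$ leaves the ratio $(\beta_2-\beta_1)^2$, so the square root yields the claimed limit $|\beta_2-\beta_1|$.

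I do not anticipate a genuine obstacle here: the argument is a routine application of the Laplace transform to a first-order linear system, followed by direct estimation of a rational transfer function. The only point that requires a little care is ensuring the residual from the nonzero initial condition collapses cleanly into a single $O(1/n)$ term rather than polluting the $\mathcal{L}\{\ell\}$ coefficient; the cancellation above shows this is fine.
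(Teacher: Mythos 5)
Your proposal is correct and follows essentially the same route as the paper: Laplace-transform the first-order linear system, solve for $\mathcal{L}\{s\}$ using $\mathcal{L}\{s'\}(\omega)=\omega\mathcal{L}\{s\}(\omega)-s(0)$, isolate the rational transfer function, and bound $|H(\mathrm{i}\omega_0)|$ by direct computation of the squared modulus. The only cosmetic difference is that the paper first eliminates $s'$ from the $w$-equation (writing $w=(\beta_2-\beta_1)\ell+(1-\beta_2+\beta_1)s$) before transforming, whereas you transform $w=s+\tfrac{\beta_2-\beta_1}{1-\beta_2}s'$ directly and verify the residuals collapse to the same $\tfrac{1-\beta_2+\beta_1}{n(\omega+(1-\beta_2))}=O(1/n)$ term.
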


The proof is provided in Appendix \ref{app:freq}. 
Based on \Cref{thm:freq}, we conclude that 
(i) for all frequencies in the loss signal $\ell(t)=\ell(\vtheta(t))$, the weight signal $w(t)$ calculated by the sampling scheme \Cref{eq:ema2} does not enlarge them, 
hence is more stable in the frequency domain given oscillations in loss signals; 
(ii) for high frequencies in the loss signal $\ell(t)=\ell(\vtheta(t))$, the weight signal $w(t)$ calculated by the sampling scheme \Cref{eq:ema2} reserves a $|\beta_2-\beta_1|$-portion, 
which can be tuned via betas (frequency tuning). 
This suggests that the sampling scheme \Cref{eq:ema2} can not only stably capture the overall trend (low frequency), but also flexibly tune the portion of details (high frequency) in loss signals. 

\subsection{Unveiling the Acceleration Effects via Complexity Analysis}

\paragraph{Computation efficiency.} 
The primary source of savings comes from the substantial reduction in the effective batch size during BP, compared with standard sampling (no data selection). 
Although ES(WP) introduces an extra forward pass (FP) on the selected mini-batch (can be \emph{omitted} if selection is performed only at the set level, e.g., ESWP.), the overhead is modest since FP requires much fewer FLOPs than BP. 
Consequently, the reduction in BP dominates the overall time complexity, leading to a significant acceleration effect, as observed empirically in \Cref{sec:effect}. 

\paragraph{Memory efficiency.}

From \Cref{eq:ema2}, the only additional memory cost of ES(WP) is to store the current score/weight value of each sample, which is negligible compared to the memory cost high-dimensional data itself. 
Moreover, because ES(WP) reduces the effective sample size in BP, it further decreases memory usage (also verified numerically in \Cref{sec:effect}). 

\paragraph{More significant benefits under resource constraints.}
The advantages of ES(WP) become even more significant in low-resource scenarios where GPU memory is limited and gradient accumulation is required, a typical scenario in fine-tuning large models (e.g., LLMs). 
In this setting, multiple BP passes must be executed before completing a single model update. 
Suppose the micro-batch size on each GPU is $b_{\text{micro}}$. Then under standard sampling, the number of BP per update step is $\lceil B / b_{\text{micro}} \rceil$. 
In contrast, ES(WP) requires only $\lceil b / b_{\text{micro}} \rceil$ BP passes. 
When $b_{\text{micro}} \leq b$, the time spent on BP under standard sampling can be up to $B/b$ times greater than with ES(WP), underscoring the stronger acceleration of our method in memory-constrained training.

\subsection{Hyper-Parameters tuning}

The primary hyper-parameters are betas in the sampling scheme \Cref{eq:ema2}, 
which are designed to balance dynamical losses and their differences during training. 
In experiments, we take the default values of $(\beta_1, \beta_2)$, 
which are obtained by a fine-grained grid search in small-scale simulations ( \Cref{sec:ablation}). 
These defaults are consistently validated to be (locally) optimal in small-scale experiments, 
and their superior effectiveness remain in large-scale tasks (\Cref{sec:effect}, (ii) \& (iii), \Cref{subsec:sft}).\footnote{Notably, here we follow \emph{a common routine} of hyper-parameters also adopted in e.g. \cite{qin2024infobatch, wang2024efficient, KA}, 
to reuse default hyper-parameters (obtained by grid search in small-scale simulations)  in large-scale experiments, without further tuning. 
This also indicates that the joint effect of betas is robust, and the gain of ES(WP) is not from simply introducing/tuning more hyper-parameters, 
but essentially from the augmented losses' differences.}
The other hyper-parameters, including mini-batch sizes, the pruning ratio and annealing epochs, are all responsible for trade-offs between the learning performance and training speed. 
All of them are user-defined, similar to previous data selection methods such as \cite{qin2024infobatch,KA,raju2021accelerating}. 
We also perform comprehensive ablations in \Cref{sec:ablation}.

\section{Experiments}
\label{sec:experiments}

In this section, we provide numerical simulations on the proposed method ES(WP) to demonstrate its superiority in terms of effectiveness, efficiency, robustness and flexibility.

\subsection{Effectiveness and Efficiency}
\label{sec:effect}

We compare the proposed methods ES/ESWP, with a group of former dynamic sampling approaches, including the standard batched sampling (Baseline), Ordered SGD (Order; \cite{pmlr-v108-kawaguchi20a}), Loss (\cite{katharopoulos2017biased}, i.e., the sampling scheme \Cref{eq:ema0_weight}), InfoBatch (\cite{qin2024infobatch}), KAKURENBO (KA; \cite{KA}), UCB (\cite{raju2021accelerating}). 
For fair comparisons, all these sampling methods are loss-based, hence \emph{much more light-weighted than gradient-based ones}, and \emph{do not require to (pre)-train or exploit additional models}. 
See Appendix \ref{sec:related_work}, Paragraph ``Dynamic sampling'' for detailed discussions. 
For all sampling methods, the hyper-parameters used in data pre-processing and optimization follow standard configurations and are maintained the same (see more details in Appendix \ref{app:sec:exp}).
All reported results are evaluated on the average of 3-4 independent random trials.. 

\paragraph{Configurations.}
For ES/ESWP, the default hyper-parameters are as follows: 
In \Cref{eq:ema2}, 
$(\beta_1, \beta_2)=(0.2, 0.9)$ for ES, $(\beta_1,\beta_2)=(0.2, 0.8)$ for ESWP; 
for both ES and ESWP, the ratio of mini-batch size ($b$) over meta-batch size ($B$) is $b/B=25\%$; 
if applicable, the annealing ratio is $5\%$, i.e., no data selection is performed at the first/last $5\%$ epochs; 
for ESWP, the pruning ratio is $20\%$. 
For the two batch level data selection methods (Order, Loss), we apply the same mini/meta-batch size as ES(WP).
For InfoBatch, KA and UCB (set level data selection methods), we use the default hyper-parameters in original papers (see more details in Appendix \ref{app:sec:exp_other}).

\paragraph{Results.}

We report the test classification accuracy and overall wall-clock time for the evaluation of both effectiveness and efficiency. The results are as follows.

(i) For small-scale tasks, we train ResNet models on CIFAR datasets, and summarize the performance of different sampling methods in \Cref{tab:acc_clean_cifar}. 
It is shown that the batch level data selection methods (Loss, Order, ES) typically exhibit limited accelerations on these small-scale tasks, since these methods often require additional forward propagation overheads that are not negligible compared to BPs.
Nevertheless, ES is the only algorithm that achieves lossless accelerations across all sampling methods. 
Notably, ESWP saves the most computation time while maintaining the best performance (also comparable to Baseline) among set level data selection methods (UCB, KA, InfoBatch). 

\begin{figure}[ht] 
    \centering
    \begin{minipage}{0.525\linewidth} 
        \centering
        \captionof{table}{The test accuracy (\%) and saved time of training ResNet models on CIFAR datasets.}
        \label{tab:acc_clean_cifar}
        \setlength{\tabcolsep}{3pt}
        \resizebox{\linewidth}{!}{
        \begin{tabular}{c|cc|cc|cc}
        \toprule
        & \multicolumn{2}{c|}{CIFAR-10 (R-18)} & \multicolumn{2}{c|}{CIFAR-100 (R-18)} & \multicolumn{2}{c}{CIFAR-100 (R-50)} \\
        \midrule
        Baseline & \multicolumn{2}{c|}{ $95.4$} & \multicolumn{2}{c|}{ $78.8$} & \multicolumn{2}{c}{ $81.1$} \\
    
        \midrule
        UCB (\cite{raju2021accelerating})
        &  $95.2_{\cyan{0.2}}$ &  $18\%$ &  $77.6_{\cyan{1.2}}$ &  $18\%$ &  $80.5_{\cyan{0.6}}$ &  $24\%$ \\
        KA (\cite{KA})
        &  $95.3_{\cyan{0.1}}$ &  $21\%$ &  $78.1_{\cyan{0.7}}$ &  $21\%$ &  $80.2_{\cyan{0.9}}$ &  $24\%$ \\
        InfoBatch (\cite{qin2024infobatch})
        &  $95.3_{\cyan{0.1}}$ &  $21\%$ &  $78.4_{\cyan{0.4}}$ &  $\bm{24\%}$ &  $80.4_{\cyan{0.7}}$ &  $28\%$ \\
    
        \midrule
        Loss (\cite{katharopoulos2017biased})
        &  $95.3_{\cyan{0.1}}$ &  $11\%$ &  $78.4_{\cyan{0.4}}$ &  $10\%$ &  $80.5_{\cyan{0.6}}$ &  $12\%$ \\
        Order (\cite{pmlr-v108-kawaguchi20a})
        &  $\bm{95.4}_{\red{0.0}}$ &  $11\%$ &  $78.5_{\cyan{0.3}}$ &  $10\%$ &  $80.9_{\cyan{0.2}}$ &  $12\%$ \\
        ES
        &  $\bm{95.4}_{\red{0.0}}$ &  $10\%$ &  $\bm{78.8}_{\red{0.0}}$ &  $10\%$ &  $\bm{81.1}_{\red{0.0}}$ &  $11\%$ \\

        \midrule
        ESWP
        &  $95.3_{\cyan{0.1}}$ &  $\bm{24\%}$ &  $78.6_{\cyan{0.2}}$ &  $\bm{24\%}$ &  $80.6_{\cyan{0.5}}$ &  $\bm{31\%}$ \\
        \bottomrule
        \end{tabular}
        }
    \end{minipage}\hfill 
    \begin{minipage}{0.45\linewidth} 
        \centering
        \setlength{\tabcolsep}{3pt}
        \captionof{table}{The test accuracy and saved time of fully fine-tuning ViT-Large on ImageNet-1K.}
        \label{tab:acc_clean_imagenet}
        \resizebox{0.5\linewidth}{!}{
        \begin{tabular}{c|c|c}
            \toprule
            & Time $\downarrow$ & Acc. (\%) \\
            \midrule
            Baseline & $-$ & $84.4$ \\
            \midrule
            UCB & $23.6\%$ & $84.2_{\cyan{0.2}}$ \\ 
            KA & $25.3\%$ & $84.3_{\cyan{0.1}}$ \\ 
            InfoBatch & $23.5\%$ &  $84.7_{\red{0.3}}$ \\ \midrule 
            Loss & $36.4\%$ & $84.3_{\cyan{0.1}}$ \\ 
            Order & $38.2\%$ & $84.2_{\cyan{0.2}}$ \\ 
            ES & $26.0\%$ & $84.7_{\red{0.3}}$ \\ \midrule 
            ESWP & $\bm{40.7\%}$ & $\bm{85.0}_{\red{0.6}}$ \\
            \bottomrule
        \end{tabular}
        }
    \end{minipage}
\end{figure}

\emph{Selected samples by ES(WP).}~~
In Appendix \ref{app:visual}, we provide a visualization of selected samples by ES(WP). 
Following \cite{pmlr-v162-mindermann22a}, we also plot the test accuracy versus the number of samples used for back propagations (BPs) for Baseline and ES/ESWP in \Cref{fig:acc-vs-bp}.
It is clear that ES/ESWP can significantly reduce the BP calculation costs and thus improve the learning efficiency. 

(ii) For large-scale tasks, we fully fine-tune ViT-Large on ImageNet-1K, and summarize the performance in \Cref{tab:acc_clean_imagenet}. 
Under this setting, ES consistently achieves the best performance among batch level data selection methods and the second-to-highest accuracy across all sampling methods.
Notably, ESWP attains the highest accuracy and the most significant wall-clock time reduction, suggesting that ESWP inherits the advantages of \emph{both} set and batch level data selection methods. 
In addition, it is observed that the training speed-up of batch level methods gets far more significant given these large-scale tasks, conversely surpassing the set level methods compared to (i).
This is due to the dominance of the saved computation in BPs. 
Furthermore, many sampling methods achieve higher accuracies than Baseline, implying the huge potential of data selection in large-scale deep learning. 
We also numerically evaluate the corresponding overall memory usage of ES (49.7GB) and ESWP (49.1GB), 
which are also reduced compared to Baseline (52.4GB), verifying the efficiency of ES(WP) in terms of memory loads besides computation costs for large-scale tasks.

(iii) For (large-scale) distributed learning tasks, we pre-train ViT-Large using MAE
(\cite{he2022masked}), and then fine-tune on ImageNet-1K without data selection. 
We plot the re-construction loss curves in  \Cref{fig:pretrain} and report final accuracy after fine-tuning in \Cref{tab:acc_imagenet_finetune}. 
It is shown that ESWP achieves lossless acceleration over Baseline, and consistently outperforms the previous SOTA method InfoBatch.

\begin{figure}[ht]
    \centering
    \begin{minipage}{0.45\linewidth} 
        \centering
        \includegraphics[width=1.0\linewidth]{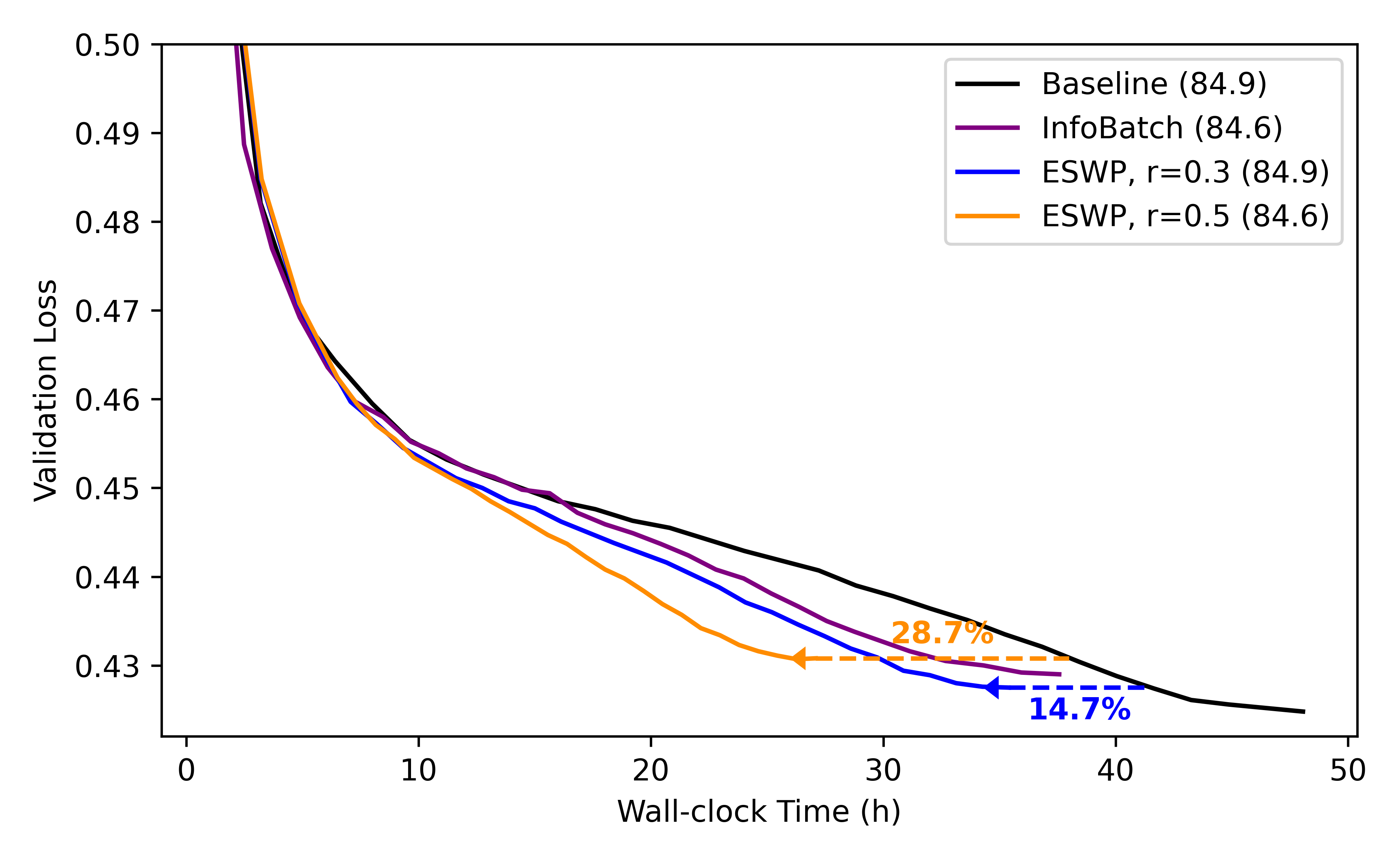}
        \captionof{figure}{Re-construction losses of MAE-based pre-training of ViT-Large on ImageNet-1K. The number in the  bracket in the legend is the validation accuracy (\%) after fine-tuning, and $r$ stands for the pruning ratio.}
        \label{fig:pretrain}
    \end{minipage}\hfill 
    \begin{minipage}{0.525\linewidth} 
        \centering
        \captionof{table}{Pre-training time and fine-tuning accuracy.}
        \label{tab:acc_imagenet_finetune}
        \setlength{\tabcolsep}{4pt}
        \resizebox{0.75\linewidth}{!}{
        \begin{tabular}{c|c|c|c}
            \toprule
            & Time (h) & Time $\downarrow$ & Acc. (\%) \\
            \midrule
            \makecell{Baseline} & $48.1$ & $-$ & $84.9$ \\
            \midrule
            \makecell{InfoBatch} & $37.6$ & $21.8\%$ & $84.6_{\cyan{0.3}}$ \\
            \midrule
            \makecell{ESWP $(r=0.3)$} & $35.1$  & $27.0\%$ & $\bm{84.9}_{\red{0.0}}$ \\ \midrule
            \makecell{ESWP $(r=0.5)$} & $\bm{27.1}$ & $\bm{44.7}\%$ & $84.6_{\cyan{0.3}}$ \\
            \bottomrule
        \end{tabular}
        }
        \centering
        \captionof{table}{The validation metric (\%) and saved time of fully fine-tuning ALBERT-Base-v2 on GLUE.} 
        \label{tab:glue}
        \setlength\tabcolsep{6.3pt}
        \resizebox{\linewidth}{!}{
        \begin{tabular}{c|cccccccc|c|c}
        \toprule
        & CoLA & SST2 & QNLI & QQP & MNLI-m & MRPC & RTE & 
        STSB & Avg. & Time$\downarrow$\\
        \midrule
        Baseline & $56.7$ & $92.2$ & $91.1$ & $\bm{90.3}$ & $\bm{84.7}$ & $88.5$ & $74.0$ & $89.6$ & $83.4$ & -\\
        \midrule
        InfoBatch & $57.9$ & $92.1$ & $91.2$ & $\bm{90.3}$ & $84.5$ & $89.2$ & $73.8$ & $\bm{89.7}$ & $83.6_{\red{0.2}}$ & $28.3\%$\\
        \midrule
        Loss & $55.1$ & $92.3$ & $91.4$ & $90.2$ & $84.4$ & $88.6$ & $69.6$ & $89.5$ & $82.6_{\cyan{0.8}}$ & $20.8\%$ \\
        \midrule
        Order & $55.4$ & $92.6$ & $91.3$ & $90.1$ & $80.9$ & $84.6$ & $63.2$ & $89.4$ & $80.9_{\cyan{2.5}}$ & $20.8\%$\\
        \midrule
        ES & $\bm{58.4}$ & $92.4$ & $91.4$ & $90.2$ & $84.5$ & $88.7$ & $\bm{75.8}$ & $89.6$ & $\bm{83.9}_{\red{0.5}}$ & $20.2\%$\\
        \midrule
        ESWP & $57.5$ & $\bm{93.1}$ & $\bm{91.7}$ & $90.0$ & $\bm{84.7}$ & $\bm{89.8}$ & $72.8$ & $89.4$ & $83.6_{\red{0.2}}$ & $\bm{33.1\%}$\\
        \bottomrule
        \end{tabular}
        }
    \end{minipage}
\end{figure} 

(iv) For NLP tasks, we fully fine-tune ALBERT-Base-v2 on GLUE, and summarize the results in \Cref{tab:glue}. 
Across most datasets and on average, ES/ESWP outperforms all the other sampling methods, and shows improved performance over Baseline with significant reduction of computation time.

\subsection{Low-Resource Settings: More Acceleration in LLM Fine-tuning}\label{subsec:sft}

\begin{wrapfigure}[8]{r}{0.35\linewidth}
    \centering
    \includegraphics[width=\linewidth]{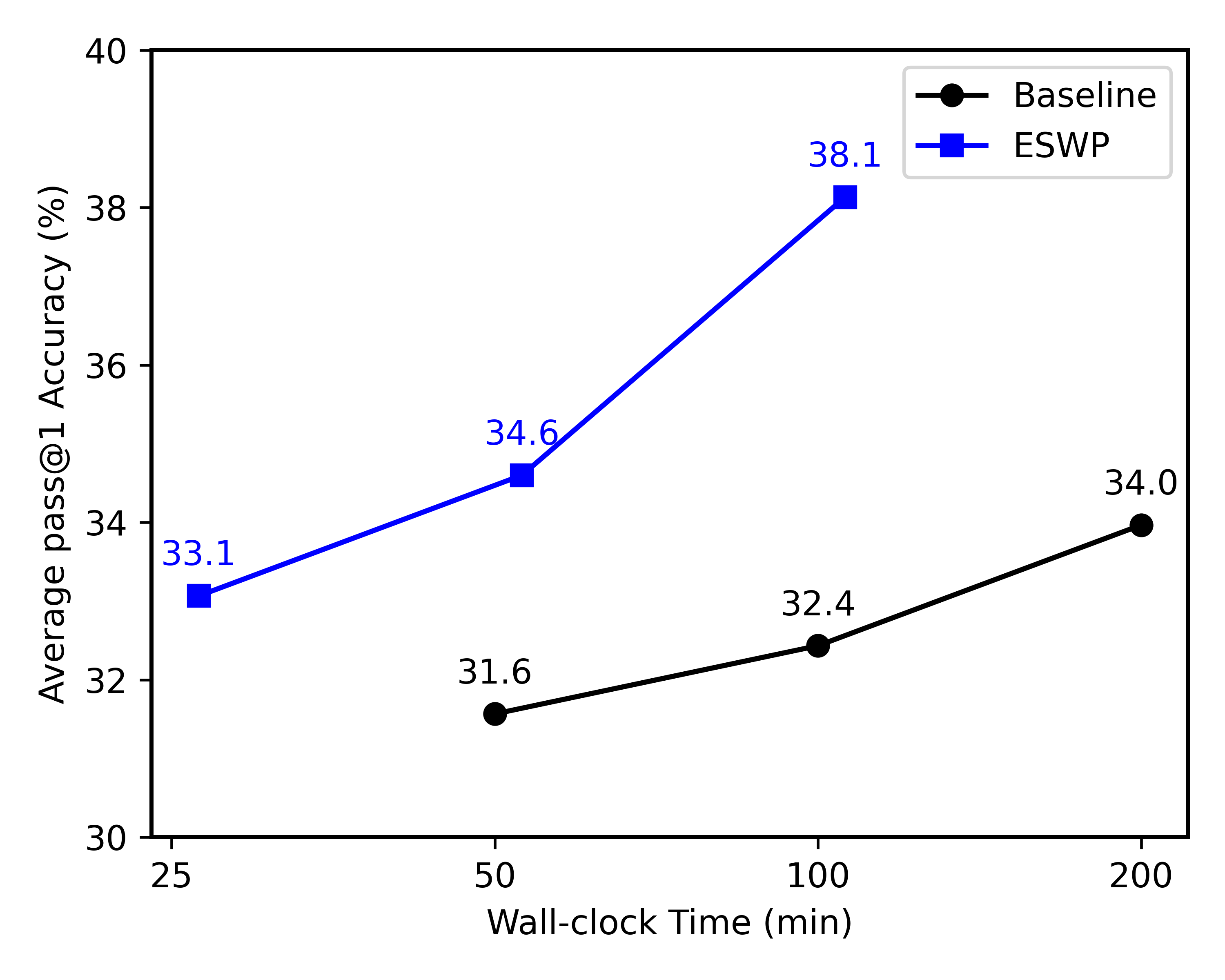}
    \caption{Results of Qwen SFT.}
    \label{fig:qwen_sft}
\end{wrapfigure}
In this section, we investigate the low-resource setting by fine-tuning Qwen2.5-Math-1.5B \citep{yang2024qwen2} on a single NVIDIA A100 (40GB).
We sample 30K instances from NuminaMath CoT \citep{numina_math_datasets},
and conduct SFT with a maximum token length $1024$ and thus $b_{\text{micro}}=8$. 
We set $B=32, b=8$ and the pruning ratio as $0.2$ for ESWP. 
The averaged evaluation results on MATH500 \citep{hendrycks2021measuring}, AIME24, and OlympiadBench \citep{he2024olympiadbench} are shown in \Cref{fig:qwen_sft}, where we evaluate the model after 1K, 2K, and 4K training steps.
Under this low-resource setting, the time cost of BPs is significant due to gradient accumulations, whereas ESWP can reduce this cost by selecting a much smaller effective mini-batch, thereby achieving learning accelerations. 
This highlights the superiority of ESWP in computation-constrained and memory-limited environments, where ESWP shows accelerations with improved performance compared to Baseline. 
More details are provided in Appendix \ref{app:subsec:sft}.

\subsection{Ablation Studies}
\label{sec:ablation}

\paragraph{Loss differences, annealing and pruning.}

We numerically test the effectiveness of the most important component adopted in ES(WP), i.e. the augmented dynamical differences of losses. 
For completeness, we also test the effect of the annealing technique and pruning strategy. 
Here, we perform ablations on combinations of ``Loss'' (the sampling scheme \Cref{eq:ema0_weight}), $\beta_1=\beta_2=0$), ``NonDif'' (corresponding to $\beta_1=\beta_2$, see \Cref{eq:conv_expand}), ``Dif'' (\Cref{eq:ema2}), corresponding to general betas $\beta_1\ne \beta_2$) and ``A'' (Annealing).
\begin{wrapfigure}[12]{r}{0.6\linewidth}
    \centering
    \begin{minipage}{0.55\linewidth} 
        \centering
        \captionof{table}{Ablations on the effect of augmented dynamical differences of losses and annealing.}
        \label{tab:ablation}
        \setlength{\tabcolsep}{3pt}
        \resizebox{\linewidth}{!}{
        \begin{tabular}{c|c|c}
            \toprule
            \multirow{2}{*}{} & ResNet-50 & ALBERT-Base \\ 
            & CIFAR-100 & CoLA \\
            \midrule
            Loss & $80.5$ & $55.1$  \\
            Loss + A & $80.8$ & $55.8$  \\
            Loss + NonDif & $80.5$ & $57.6$ \\
            Loss + Dif & $\bm{81.1}$ & $57.5$  \\
            Loss + A + NonDif & $80.4$ & $57.6$  \\
            ES = Loss + A + Dif & $\bm{81.1}$ & $\bm{58.4}$  \\
            \bottomrule
        \end{tabular}
        }
    \end{minipage}\hfill
    \begin{minipage}{0.425\linewidth} 
        \centering
        \captionof{table}{Ablations on pruning strategies. Here Random denotes purely random data pruning.} 
        \label{tab:ablation_pr}
        \setlength\tabcolsep{3pt}
        \resizebox{\linewidth}{!}{
        \begin{tabular}{c|c|c}
            \toprule
            & \makecell{CoLA \\ ALBERT-Base} & \makecell{SST-2 \\ ALBERT-Base} \\
            \midrule
            Baseline & $55.0$, $-$ & $91.9$, $-$ \\
            Random & $53.9_{\cyan{1.1}}$, $18\%$ & $91.7_{\cyan{0.2}}$, $20\%$ \\
            ES & $\bm{56.2}_{\red{1.2}}$, $16\%$ & $92.0_{\red{0.1}}$, $15\%$ \\ 
            ESWP & $54.7_{\cyan{0.3}}$, $\bm{24\%}$ &  $\bm{92.3}_{\red{0.4}}$, $\bm{24\%}$ \\ 
            \bottomrule
        \end{tabular}
        }
    \end{minipage}
\end{wrapfigure}
From \Cref{tab:ablation}, it is observed that: (i) Annealing is an effective technique to boost performance; 
(ii) Although sampling only involving historical losses (``NonDif'') can contribute to the improvements, the additional incorporation of dynamical loss differences consistently provides more substantial benefits to the learning process (see consistent non-trivial improvements for various datasets and models in the last two rows of \Cref{tab:ablation}). 
In \Cref{tab:ablation_pr}, we further ablate for the pruning strategies: \Cref{eq:ema2} used in ESWP versus naive random data pruning. 
It is shown that both the performance and efficiency of purely random pruning are consistently and substantially worse than ESWP.

\paragraph{Trade-offs between performance and speed.}

We emphasize that batch sizes $(b, B)$, the pruning ratio and annealing epochs in ES(WP) are all user-defined, and flexible to trade off between learning performance and training costs. 
We evaluate different values of $b/B$ when fine-tuning ViT-Large on ImageNet-1K, and varied pruning ratios when training R-18 on Cifar-100. The results are illustrated in \Cref{fig:b/B_pr}. 
It is shown that ES robustly achieves lossless acceleration when $b/B\geq 1/16$;  
when the data selection is too aggressive ($b/B\leq1/32$), the performance degrades as expected (\Cref{fig:b/B_pr}, left), due to the increase of variances in stochastic gradients. 
Also, there is a clear trade-off between the performance and speed shown in \Cref{fig:b/B_pr} (right), 
where setting the pruning ratio around $20\%\sim 30\%$ seems efficient. 
We further evaluate different annealing ratios (``ar''; i.e., annealing epochs over total epochs) when training R-18 on CIFAR-100 (see \Cref{tab:ablation_ar}), showcasing its robustness. 

\begin{figure}[ht] 
    \centering
    \begin{minipage}{0.675\linewidth} 
        \centering
        \subfigure
        {\includegraphics[clip, width=0.425\linewidth]{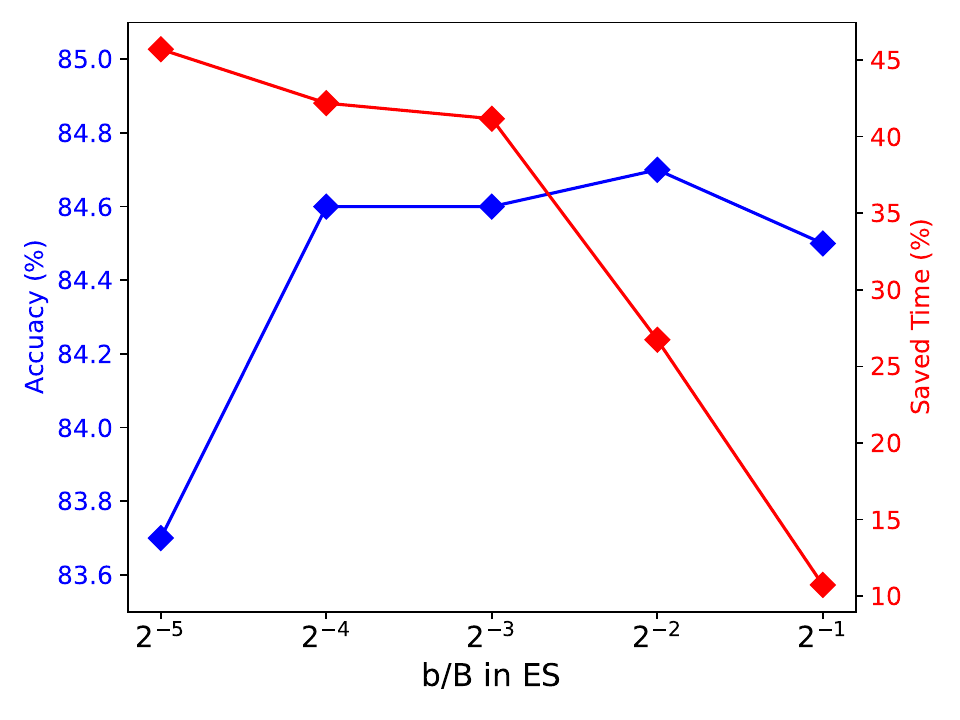}}
        \subfigure
        {\includegraphics[clip, width=0.425\linewidth]{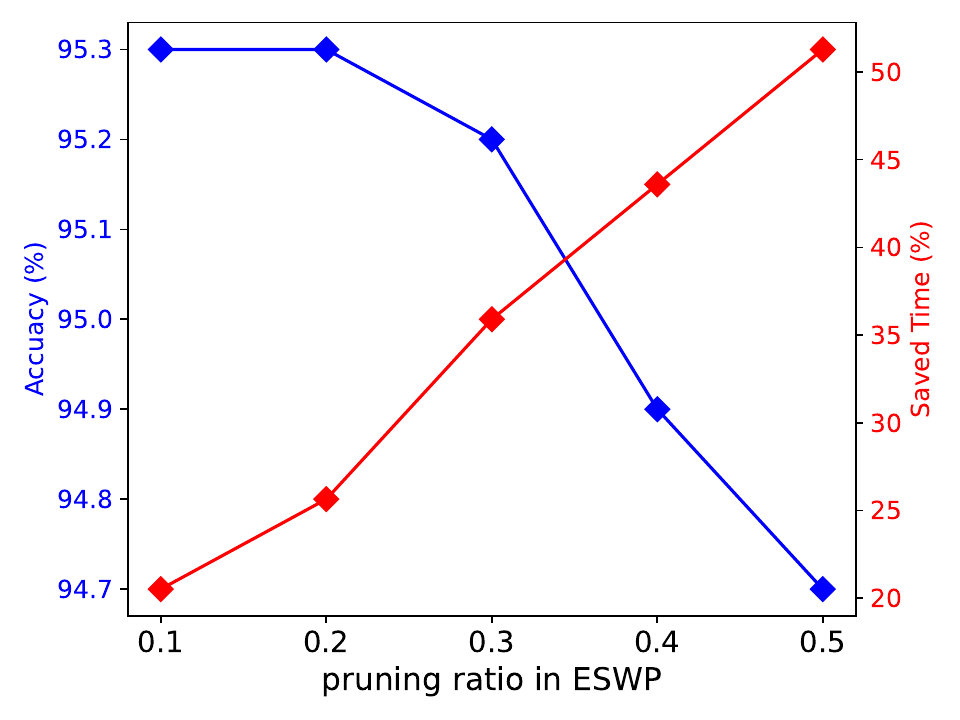}}
        \captionof{figure}{Trade-offs between learning accuracy and wall-clock time.}
        \label{fig:b/B_pr}
    \end{minipage}\hfill
    \begin{minipage}{0.3\linewidth} 
        \centering
        \captionof{table}{Ablations on the annealing (default in bold).}
        \label{tab:ablation_ar}
        \centering
        \setlength{\tabcolsep}{2pt}
        \resizebox{0.975\linewidth}{!}{
        \begin{tabular}{c|c|c|c|c}
            \toprule
            ar & $0.0$ & $\bm{0.05}$ & $0.075$ & $0.1$ \\
            \midrule
            Acc. (\%) & $78.60$ & $\bm{78.79}$ & $78.32$ & $78.20$ \\
            \bottomrule
        \end{tabular}
        }
    \end{minipage}
\end{figure}

\paragraph{Choices of $(\beta_1,\beta_2)$.}

To investigate the impact of newly introduced hyper-parameters (betas) in ES, we test different choices of $(\beta_1,\beta_2)$ when training ResNet-18 on CIFAR datasets and ALBERT-Base model on the CoLA dataset.
The results shown in \Cref{fig:ablation_beta} roughly verify the ``optimality'' of defaults ($(\beta_1, \beta_2)=(0.2, 0.9)$). 
In addition, we test denser betas around the defaults when training ResNet-18 on the CIFAR-100 
(see \Cref{fig:ablation_beta_finegrain}), further verifying the (local) optimality of defaults. 

\begin{figure}[ht]
    \centering
    \begin{minipage}{0.75\linewidth}
    \centering
    \subfigure[\scriptsize CIFAR-10 (ResNet-18)]{
        \includegraphics[clip, width=0.315\linewidth]{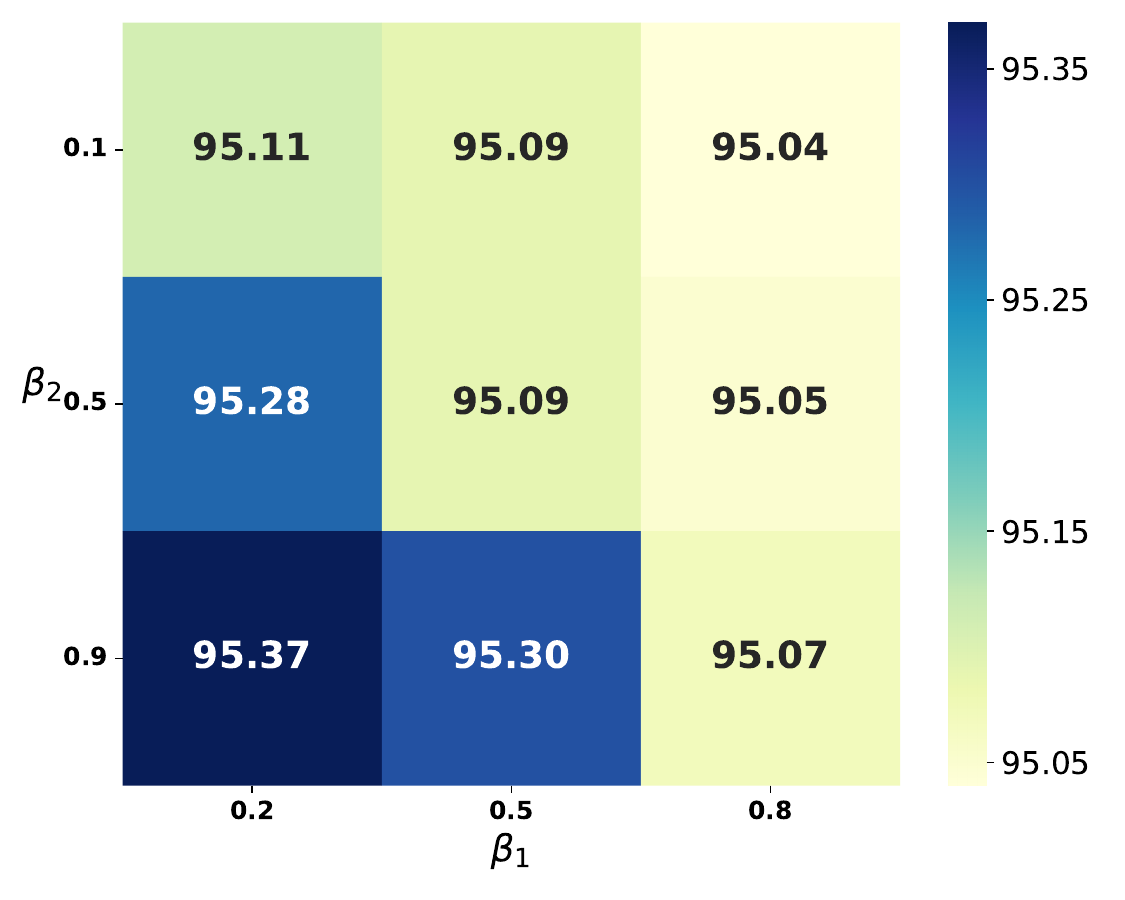}
        \label{fig:ablation_beta_cifar10}
    }
    \hspace{-3.25mm}
    \subfigure[\scriptsize CIFAR-100 (ResNet-18)]{
        \includegraphics[clip, width=0.315\linewidth]{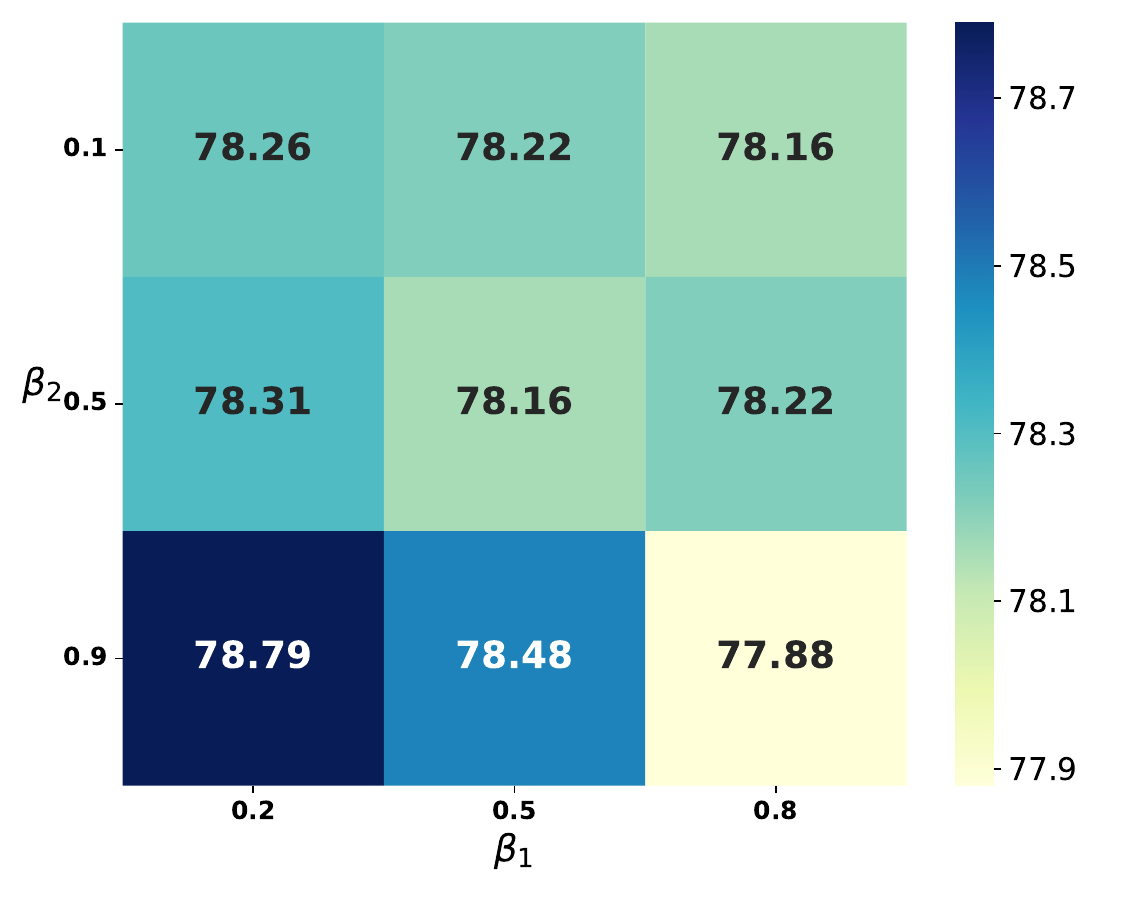}
        \label{fig:ablation_beta_cifar100}
    }
    \hspace{-3.25mm}
    \subfigure[\scriptsize CoLA (ALBERT-Base)]{
        \includegraphics[clip, width=0.315\linewidth]{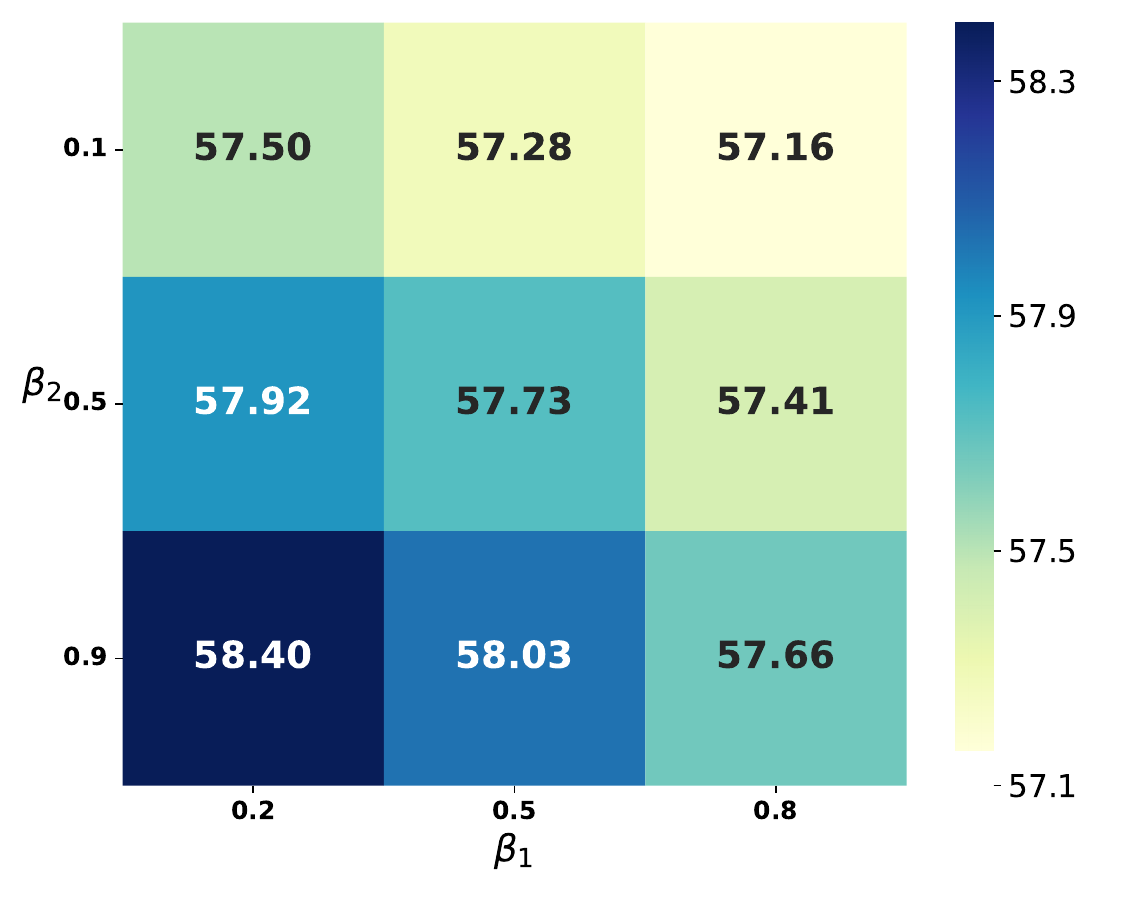}
        \label{fig:ablation_beta_cola}
    }
    \caption{The effect of $(\beta_1,\beta_2)$.
    }
    \label{fig:ablation_beta}
    \end{minipage}\hfill
    \begin{minipage}{0.225\linewidth} 
        \centering
        \includegraphics[width=1.0\linewidth]{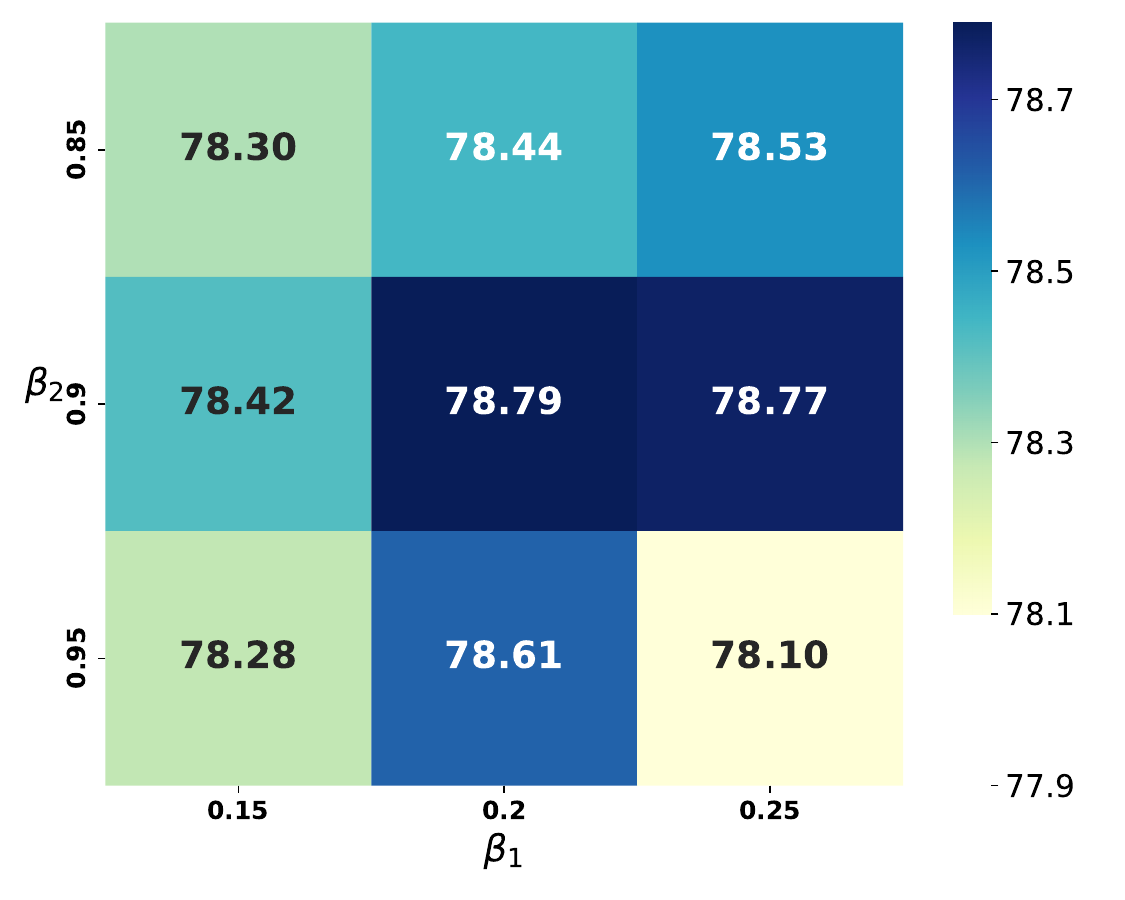}
        \caption{Local optimality of default betas.}
        \label{fig:ablation_beta_finegrain}
    \end{minipage}
\end{figure}

\section{Conclusion}\label{sec:conclusion}

In this work, we propose a simple yet effective framework, Evolved Sampling, which can be applied to general machine learning tasks to improve the data efficiency in a dynamic manner. 
By further adopting dynamical differences and flexibly tuning frequencies of historical losses to determine samples' importance for data selection, Evolved Sampling can achieve lossless training with significant accelerations.  
Studies in the future may include three aspects: 
(i) More rigorous mathematical analysis on the effect of data selection (\cite{kolossov2024towards}); 
(ii) More specific applications, such as data selection on domain mixtures (\cite{NEURIPS2023_70b8505a, NEURIPS2023_dcba6be9});  
(iii) More efficient and scalable implementations, such as data parallelism (\cite{you2017large, You2020Large}). 
These directions are certainly worthy of explorations in the future.

\bibliography{ref}
\bibliographystyle{plainnat}

\appendix

\section{Related Work}
\label{sec:related_work}

\paragraph{Static sampling} 

Methods to sampling statically can be based on geometry, uncertainty, error, meta optimization, dataset distillation, etc. 
With numerous studies on theoretical guarantees (\cite{10.1145/1007352.1007400, huang2023nearoptimal, pmlr-v37-bachem15}), 
the coreset selection is designed to approximate original datasets with smaller (weighted) subsets, typically achieved by clustering in representation spaces (\cite{xia2023moderate, abbas2023semdedup, NEURIPS2022_7b75da9b}). 
Uncertainty-based methods use probability metrics such as the confidence, entropy (\cite{Coleman2020Selection}) and distances to  decision boundaries (\cite{ducoffe2018adversarial, margatina-etal-2021-active, pmlr-v97-dasgupta19a, pmlr-v139-liu21f}). 
Sampling methods based on errors assume that training samples with more contributions to errors are more important. Errors are evaluated with merics such as forgetting events (\cite{toneva2018an}), \textsc{GraNd} \& EL2N score (\cite{NEURIPS2021_ac56f8fe}), and sensitivity (\cite{10.5555/1873601.1873651, NEURIPS2018_63bfd6e8}). 
The meta optimization methods apply a bilevel framework to learn the re-weighting. 
In general, existing studies such as \textsc{Retrieve} (\cite{NEURIPS2021_793bc52a}), \textsc{Glister} (\cite{Killamsetty_Sivasubramanian_Ramakrishnan_Iyer_2021}), MOLERE (\cite{jain2024improving}), CAFE (\cite{Wang_2022_CVPR}) and so on, consider the data selection as the outer objective (over selection weights), and the optimization of model parameters on selected subsets as the inner objective. 
Dataset distillation aims to synthesize an informative but smaller subset from the original (large) dataset. 
Although there are multiple implementations to reduce the overall loads, such as distributed kernel computation (\cite{NEURIPS2021_299a23a2}), category decoupling (\cite{Wang_2022_CVPR}), random modeling (\cite{Zhao_2023_WACV}) and so on, 
the dataset distillation still requires to optimize over both the model and data, and is hence expensive. 
A recent work \cite{sachdeva2024train} also leverages the zero-shot reasoning capability of instruction-tuned large language models (LLMs) to directly assess the quality of data examples. As is discussed before, these static sampling methods require extra training, leading to considerable costs in both computation and memory. 

\paragraph{Dynamic sampling}

Methods to sampling dynamically typically leverage metrics based on losses and gradients along the training process. Loss-adaptive sampling re-weights data points during the training according to current losses 
(\cite{jiang2019accelerating, losh2016online, schaul2016pri, pmlr-v108-kawaguchi20a, qin2024infobatch, KA, kumar2023stochastic, 10459767, katharopoulos2017biased, Shrivastava_2016_CVPR, pmlr-v235-das24b}) and historical losses (\cite{attendu-corbeil-2023-nlu, raju2021accelerating, oren-etal-2019-distributionally, Sagawa2020Distributionally}). 
To name a few, Ordered SGD (\cite{pmlr-v108-kawaguchi20a}) selects top-$q$ samples in terms of the loss ranking per training step.  
InfoBatch (\cite{qin2024infobatch}) randomly prunes a portion of less informative samples with losses below the average and then re-scales the gradients.
KAKURENBO (\cite{KA}) combines current losses with the prediction accuracy and confidence to design a sampling framework with moving-back. 
\cite{kumar2023stochastic} and \cite{10459767} assign weights as functions of current losses based on the robust optimization theory. 
\cite{attendu-corbeil-2023-nlu} and \cite{raju2021accelerating} use the exponential moving average over past losses for sampling. 
There are also studies adopting additional reference models, including \cite{pmlr-v162-mindermann22a, NEURIPS2023_1af3e0bf, NEURIPS2023_dcba6be9} and so on. 
These methods either use the information of losses inadequately, or require to train or exploit extra architectures. 
Gradient-based sampling methods involve (i) gradient matching, such as CRAIG (\cite{pmlr-v119-mirzasoleiman20a}) and \textsc{Grad-Match} (\cite{pmlr-v139-killamsetty21a}), which approximate the ``full'' gradients computed on original datasets via the gradients computed on subsets; (ii) gradient adaption, where the sampling probability is basically determined by current scales of gradients (\cite{pmlr-v162-hanchi22a,pmlr-v80-katharopoulos18a}). 
Obviously, gradient-based sampling methods lead to much more computation and memory overheads than loss-based methods. 
A recent work \cite{wang2024efficient} uses a intricate layer-wise sampling scheme with complex variance control, 
which develops former literature \cite{pmlr-v37-zhaoa15, alain2015variance, NIPS2014_b3310bba} applying importance sampling methods to accelerate the convergence by reducing variances. 
A very recent work \cite{gu2025data} also leverages the optimal control theory (i.e. Pontryagin’s maximum principle, PMP) to formulate and decide sampling weights, 
where both the gradient and Hessian are computed and all historical checkpoints are stored. 
Obviously, these methods usually suffer from significant computation and memory loads, 
since extra complexities of at least model dimensions are introduced at every training step. 
Although there are other gradient-based data selection methods (e.g. \cite{NEURIPS2024_ed165f2f}: local approximation-based selection; \cite{wang2025capturing}: counterfactual-based selection) developing computation reduction techniques such as the ghost inner-product (of gradients) and generalized Gauss-Newton approximation (of Hessians), 
these methods are not directly extendable to other popular optimizers like Adam. 

\paragraph{\emph{Set} level versus \emph{batch} level}

Dynamic sampling methods can be divided into two categories based on the level where data selection is performed: (i) \emph{set} level selection, to prune the whole dataset at the beginning of each epoch (\cite{qin2024infobatch, raju2021accelerating, KA, attendu-corbeil-2023-nlu}); 
(ii) \emph{batch} level selection, to sample subsets from the original batches for back propagations (\cite{pmlr-v108-kawaguchi20a, katharopoulos2017biased, pmlr-v80-katharopoulos18a, pmlr-v162-mindermann22a}). These two types of methods, facilitating training accelerations from different perspectives, are not mutually exclusive. However, to the best of our knowledge, we are not aware of any algorithms combining both of them.

\section{Proofs and Supplemental Theory}\label{app:sec:proof}

\subsection{Proof of Proposition \ref{prop:lwgf>gf}}
\label{app:lwgf>gf}

\begin{proposition}[A full version of Proposition \ref{prop:lwgf>gf}]\label{prop:lwgf>gf_formal}
    Consider the continuous-time idealization of the gradient decent, i.e. the standard gradient flow training dynamics (no data selection) 
    \begin{align}
        \frac{\mathrm{d}}{\mathrm{d}t} \htheta_n (t)
        = - \nabla_{\vtheta} \hat{L}_n (\htheta_n (t))
        = - \frac{1}{n} \sum_{i=1}^n \nabla_{\vtheta} \ell_i (\htheta_n (t)), 
        \quad \htheta_n (0)=\vtheta_0, 
    \end{align}
    and its loss-weighted variant  
    \begin{align}\label{eq:lwgf_formal}
        \frac{\mathrm{d}}{\mathrm{d}s} \htheta_n^{\text{lw}} (s)
        = - \sum_{i=1}^n 
        \frac{\ell_i (\htheta_n^{\text{lw}} (s))}
        {\sum_{j=1}^n \ell_j (\htheta_n^{\text{lw}} (s))} \nabla_{\vtheta} \ell_i (\htheta_n^{\text{lw}} (s)), 
        \quad \htheta_n^{\text{lw}} (0)=\vtheta_0.
    \end{align}
    Assume that there exists $\vtheta^* \in \mtheta$ such that $\hat{L}_n (\vtheta^*)=0$,\footnote{One can find empirical evidences of this assumption (the optimal training loss can be zero) in e.g. \cite{zhang2017understanding} (Figure 1 (a)).} and $\ell_i(\cdot)$ is convex for each $i\in [n]$. 
    Then, we have the more-than sub-linear convergence rate of (\Cref{eq:lwgf_formal}): 
    \begin{align}\label{eq:lwgf_rate}
        \hat{L}_n (\htheta_n ^{\text{lw}}(s_0))  
        -\hat{L}_n(\vtheta^*) 
        &\le \frac{1}{2s} \|\vtheta_0-\vtheta^*\|_2^2 
        - \frac{1}{s} \int_{0}^{s} \Delta(s') \mathrm{d}s', \quad s>0,~s_0\in[0, s],  
    \end{align} 
    where $\Delta (\cdot)$ is a positive-valued function on $[0,\infty)$.  
    Moreover, for any $s, t \ge 0$ such that $\hat{L}_n (\htheta_n (t))=\hat{L}_n (\htheta_n ^{\text{lw}}(s)) \triangleq  l\ge 0$,\footnote{For example, at the initialization, $\hat{L}_n (\htheta_n (0))=\hat{L}_n(\vtheta_0)=\hat{L}_n (\htheta_n ^{\text{lw}}(0))$.}
    we have 
    \begin{align}
        \frac{\mathrm{d}}{\mathrm{d}s} \|\htheta_n^{\text{lw}} (s)-\vtheta^*\|_2^2 
        &\le -2 \left(l+\Delta(s)\right), \\ 
        \frac{\mathrm{d}}{\mathrm{d}t} \|\htheta_n (t)-\vtheta^*\|_2^2 
        &\le -2 l.
    \end{align}
\end{proposition}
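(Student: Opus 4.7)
The plan is to analyze the Lyapunov function $V(s):=\|\hat\theta_n^{\text{lw}}(s)-\vtheta^*\|_2^2$ along the loss-weighted flow \Cref{eq:lwgf_formal}, and extract an extra negative term on top of the classical $-2\hat L_n$ drift that one gets for vanilla gradient flow. Write $\mathcal{S}(\vtheta):=\sum_{j=1}^n \ell_j(\vtheta)=n\hat L_n(\vtheta)$. First, since $\hat L_n(\vtheta^*)=0$ and each $\ell_i\ge 0$, we must have $\ell_i(\vtheta^*)=0$ for all $i\in[n]$. Combined with convexity of $\ell_i$, this gives the pointwise inequality
\begin{equation*}
\langle \nabla_{\vtheta}\ell_i(\vtheta),\,\vtheta-\vtheta^*\rangle \ge \ell_i(\vtheta)-\ell_i(\vtheta^*)=\ell_i(\vtheta),\qquad i\in[n],
\end{equation*}
which is the workhorse for both the LWGF rate and the comparison with GF.

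Differentiating $V$ and plugging in \Cref{eq:lwgf_formal} yields
\begin{equation*}
\frac{\mathrm{d}V}{\mathrm{d}s}=-\frac{2}{\mathcal{S}(\hat\theta_n^{\text{lw}})}\sum_{i=1}^n \ell_i(\hat\theta_n^{\text{lw}})\,\langle\nabla_{\vtheta}\ell_i(\hat\theta_n^{\text{lw}}),\,\hat\theta_n^{\text{lw}}-\vtheta^*\rangle\le -\frac{2}{\mathcal{S}(\hat\theta_n^{\text{lw}})}\sum_{i=1}^n \ell_i(\hat\theta_n^{\text{lw}})^2,
\end{equation*}
by the convexity inequality above. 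The key sharpening step is the elementary identity $\sum_i \ell_i^2 = n\hat L_n^2 + n\,\mathrm{Var}_i[\ell_i]$ (equivalently Cauchy--Schwarz with a remainder). Dividing by $\mathcal{S}=n\hat L_n$, I obtain the drift bound
\begin{equation*}
\frac{\mathrm{d}V}{\mathrm{d}s}\le -2\hat L_n(\hat\theta_n^{\text{lw}}(s))-2\Delta(s),\qquad \Delta(s):=\frac{\mathrm{Var}_i[\ell_i(\hat\theta_n^{\text{lw}}(s))]}{\hat L_n(\hat\theta_n^{\text{lw}}(s))}\ge 0,
\end{equation*}
which immediately yields the second (comparison) claim: at any time $t,s\ge 0$ where $\hat L_n(\hat\theta_n(t))=\hat L_n(\hat\theta_n^{\text{lw}}(s))=l$, the analogous computation for GF (no weighting, same convexity step) gives $\frac{\mathrm{d}}{\mathrm{d}t}\|\hat\theta_n(t)-\vtheta^*\|_2^2 \le -2l$, while the LWGF gets the extra $-2\Delta(s)$ term.

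For the rate \Cref{eq:lwgf_rate}, I will integrate the drift bound on $[0,s]$ and use $V(s)\ge 0$, $V(0)=\|\vtheta_0-\vtheta^*\|_2^2$, to obtain
\begin{equation*}
2\int_0^s \hat L_n(\hat\theta_n^{\text{lw}}(s'))\,\mathrm{d}s' + 2\int_0^s \Delta(s')\,\mathrm{d}s' \le \|\vtheta_0-\vtheta^*\|_2^2.
\end{equation*}
Choosing $s_0\in[0,s]$ to minimize $\hat L_n\circ\hat\theta_n^{\text{lw}}$ on $[0,s]$ (so that $s\cdot \hat L_n(\hat\theta_n^{\text{lw}}(s_0))\le \int_0^s \hat L_n(\hat\theta_n^{\text{lw}}(s'))\mathrm{d}s'$) and dividing by $s$ gives the claimed bound, after noting $\hat L_n(\vtheta^*)=0$.

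\textbf{Main obstacle.} The nontrivial step is identifying the right way to turn $\sum_i\ell_i\nabla\ell_i$ into something controllable: the natural-looking convexity application produces $\sum_i\ell_i^2/\mathcal{S}$, and one must recognize this as $\hat L_n+\mathrm{Var}_i[\ell_i]/\hat L_n$ to isolate the additional gain $\Delta(s)$ over vanilla GF. A secondary subtlety is the $s_0\in[0,s]$ quantifier: one should either invoke the minimum over $[0,s]$ or monotonicity of $s\mapsto \hat L_n(\hat\theta_n^{\text{lw}}(s))$ along the flow (which can be verified directly from \Cref{eq:lwgf_formal} since the velocity is $-\tfrac{1}{2\mathcal{S}}\nabla\sum_i\ell_i^2$, a descent direction for $\sum_i\ell_i^2$ and consequently for $\hat L_n$ via the Cauchy--Schwarz link).
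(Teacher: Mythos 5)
Your proof is correct, and while it shares the paper's overall strategy (track the Lyapunov function $\|\htheta_n^{\text{lw}}(s)-\vtheta^*\|_2^2$, apply convexity termwise, integrate, and pick $s_0$ by an averaging argument), the key step isolating the extra gain is genuinely different and, in fact, cleaner. The paper works with a general comparison point $\vtheta$, splits the weighted-minus-uniform sum into a term $T_1$ and a cross-term $T_2$ involving the standard flow at a matched loss level, and then lower-bounds $T_1\big|_{\vtheta=\vtheta^*}$ by an ad hoc quantity $\Delta(s)=\delta(s)\sum_{i\in I^+}\bigl(\tfrac{\ell_i}{\sum_j\ell_j}-\tfrac1n\bigr)$ built from the gap between above-average and below-average losses. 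You instead plug in $\vtheta^*$ directly, observe that the drift is $-\tfrac{2}{\mathcal{S}}\sum_i\ell_i^2$, and identify this \emph{exactly} as $-2\hat{L}_n-2\,\Var_i[\ell_i]/\hat{L}_n$; note that the paper's $T_1\big|_{\vtheta=\vtheta^*}$ equals $\sum_i\ell_i^2/\mathcal{S}-\hat L_n=\Var_i[\ell_i]/\hat L_n$ precisely, so your $\Delta$ is the exact value of the quantity the paper only lower-bounds, and your rate is at least as tight while also giving $\Delta$ a transparent interpretation (loss dispersion over mean loss). Your choice of $s_0$ as the minimizer of $\hat L_n\circ\htheta_n^{\text{lw}}$ on $[0,s]$ is simpler than the paper's Jensen-plus-intermediate-value argument and equally valid. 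Two minor caveats: (i) your parenthetical suggestion that $\hat L_n$ is monotone along the flow because the velocity is a descent direction for $\sum_i\ell_i^2$ does not follow (descent for $\sum_i\ell_i^2$ need not imply descent for $\sum_i\ell_i$), but you do not need it since the minimizer route works; (ii) your $\Delta(s)$ vanishes when all $\ell_i$ are equal, so strict positivity requires excluding that degenerate case — exactly the same caveat the paper dismisses as a probability-zero triviality, and both arguments implicitly assume $\hat L_n>0$ along the flow so that the weights are well defined.
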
 

\begin{proof}
For any $\vtheta \in \mtheta$, we have 
\begin{align}\label{eq:gf_convgc_pre}
    \frac{\mathrm{d}}{\mathrm{d}t} \|\htheta_n (t)-\vtheta\|_2^2
    &= 2 \left\langle \htheta_n (t)-\vtheta, 
    \frac{\mathrm{d}}{\mathrm{d}t} \htheta_n (t) \right\rangle 
    \nonumber \\
    &= \frac{2}{n} \sum_{i=1}^n 
    \left\langle \vtheta-\htheta_n (t), 
    \nabla_{\vtheta} \ell_i (\htheta_n (t)) \right\rangle \nonumber \\
    &\le \frac{2}{n} \sum_{i=1}^n 
    \left(\ell_i (\vtheta)-\ell_i (\htheta_n (t))\right),
\end{align}
and 
\begin{align}\label{eq:lwgf_convgc_pre}
    \frac{\mathrm{d}}{\mathrm{d}s} \|\htheta_n^{\text{lw}} (s)-\vtheta\|_2^2
    &= 2 \left\langle \htheta_n^{\text{lw}} (s)-\vtheta, 
    \frac{\mathrm{d}}{\mathrm{d}s} \htheta_n^{\text{lw}} (s) \right\rangle 
    \nonumber \\
    &= 2 \sum_{i=1}^n \frac{\ell_i (\htheta_n^{\text{lw}} (s))}
    {\sum_{j=1}^n \ell_j (\htheta_n^{\text{lw}} (s))}
    \left\langle \vtheta-\htheta_n^{\text{lw}} (s), 
    \nabla_{\vtheta} \ell_i (\htheta_n^{\text{lw}} (s)) \right\rangle \nonumber \\
    &\le 2 \sum_{i=1}^n 
    \frac{\ell_i (\htheta_n^{\text{lw}} (s))}
    {\sum_{j=1}^n \ell_j (\htheta_n^{\text{lw}} (s))}
    \left(\ell_i (\vtheta)-\ell_i (\htheta_n ^{\text{lw}}(s))\right).
\end{align}
Note that 
\begin{align}\label{eq:gf_lwgf_rltn}
    &\sum_{i=1}^n 
    \left[\frac{\ell_i (\htheta_n^{\text{lw}} (s))}
    {\sum_{j=1}^n \ell_j (\htheta_n^{\text{lw}} (s))}
    \left(\ell_i (\vtheta)-\ell_i (\htheta_n ^{\text{lw}}(s))\right)
    - \frac{1}{n} \left(\ell_i (\vtheta)-\ell_i (\htheta_n (t))\right)\right] \nonumber \\
    =\,& \sum_{i=1}^n 
    \left(\frac{\ell_i (\htheta_n^{\text{lw}} (s))}
    {\sum_{j=1}^n \ell_j (\htheta_n^{\text{lw}} (s))}
    -\frac{1}{n}\right)
    \left(\ell_i (\vtheta)-\ell_i (\htheta_n ^{\text{lw}}(s))\right)
    +\frac{1}{n} \sum_{i=1}^n \left(\ell_i (\htheta_n (t))-\ell_i (\htheta_n ^{\text{lw}}(s))\right) \nonumber \\
    =\,&-\underbrace{\sum_{i=1}^n 
    \left(\frac{\ell_i (\htheta_n^{\text{lw}} (s))}
    {\sum_{j=1}^n \ell_j (\htheta_n^{\text{lw}} (s))}
    -\frac{1}{n}\right)
    \left(\ell_i (\htheta_n ^{\text{lw}}(s))
    -\ell_i (\vtheta)\right)}_{T_1}
    +\underbrace{\left(\hat{L}_n (\htheta_n (t))
    -\hat{L}_n (\htheta_n ^{\text{lw}}(s))\right)}_{T_2},
\end{align}
we analyze $T_1,T_2$ separately. 
    
(i) $T_1$: Note that if $\frac{\ell_i (\htheta_n^{\text{lw}} (s))}{\sum_{j=1}^n \ell_j (\htheta_n^{\text{lw}} (s))} \le \frac{1}{n}$ for any $i\in [n]$, 
we get $\frac{\ell_i (\htheta_n^{\text{lw}} (s))}{\sum_{j=1}^n \ell_j (\htheta_n^{\text{lw}} (s))} = \frac{1}{n}$ for any $i\in [n]$, 
which holds in the zero probability and implies the triviality. 
Let $I^+:=\left\{i\in [n]:\frac{\ell_i (\htheta_n^{\text{lw}} (s))}{\sum_{j=1}^n \ell_j (\htheta_n^{\text{lw}} (s))} > \frac{1}{n}\right\} \ne \varnothing$, and $i^+_{\text{min}}:=\arg\min_{i\in I^+} \ell_i (\htheta_n^{\text{lw}} (s))$, 
and similarly $I^-:=\left\{i\in [n]:\frac{\ell_i (\htheta_n^{\text{lw}} (s))}{\sum_{j=1}^n \ell_j (\htheta_n^{\text{lw}} (s))} \le \frac{1}{n}\right\} \ne \varnothing$, and $i^-_{\text{max}}:=\arg\max_{i\in I^-} \ell_i (\htheta_n^{\text{lw}} (s))$. 
Obviously, $\ell_{i^+_{\text{min}}} (\htheta_n^{\text{lw}} (s))
>\frac{1}{n}\sum_{j=1}^n \ell_j (\htheta_n^{\text{lw}} (s)) 
\ge \ell_{i^-_{\text{max}}} (\htheta_n^{\text{lw}} (s))$, 
hence $\delta(s):=\ell_{i^+_{\text{min}}} (\htheta_n^{\text{lw}} (s))-\ell_{i^-_{\text{max}}} (\htheta_n^{\text{lw}} (s))>0$ for any $s \ge 0$. 
Notice that $\hat{L}_n (\vtheta^*)=0 \Leftrightarrow \ell_i (\vtheta^*)=0$, $\forall i\in [n]$, we have 
\begin{align}\label{eq:T1>0}
    T_1\big|\big._{\vtheta^=\vtheta^*}
    &=\sum_{i\in I^+}
    \left(\frac{\ell_i (\htheta_n^{\text{lw}} (s))}
    {\sum_{j=1}^n \ell_j (\htheta_n^{\text{lw}} (s))}
    -\frac{1}{n}\right)
    \left(\ell_i (\htheta_n ^{\text{lw}}(s))
    -\ell_i (\vtheta^*)\right) \nonumber \\
    & \quad + \sum_{i\in I^-} 
    \left(\frac{\ell_i (\htheta_n^{\text{lw}} (s))}
    {\sum_{j=1}^n \ell_j (\htheta_n^{\text{lw}} (s))}
    -\frac{1}{n}\right)
    \left(\ell_i (\htheta_n ^{\text{lw}}(s))
    -\ell_i (\vtheta^*)\right) \nonumber \\
    & = \sum_{i\in I^+}
    \left(\frac{\ell_i (\htheta_n^{\text{lw}} (s))}
    {\sum_{j=1}^n \ell_j (\htheta_n^{\text{lw}} (s))}
    -\frac{1}{n}\right)
    \ell_i (\htheta_n ^{\text{lw}}(s)) 
    + \sum_{i\in I^-} 
    \left(\frac{\ell_i (\htheta_n^{\text{lw}} (s))}
    {\sum_{j=1}^n \ell_j (\htheta_n^{\text{lw}} (s))}
    -\frac{1}{n}\right)
    \ell_i (\htheta_n ^{\text{lw}}(s)) \nonumber \\
    & \ge \sum_{i\in I^+}
    \left(\frac{\ell_i (\htheta_n^{\text{lw}} (s))}
    {\sum_{j=1}^n \ell_j (\htheta_n^{\text{lw}} (s))}
    -\frac{1}{n}\right)
    \ell_{i^+_{\text{min}}} (\htheta_n^{\text{lw}} (s))
    + \sum_{i\in I^-} 
    \left(\frac{\ell_i (\htheta_n^{\text{lw}} (s))}
    {\sum_{j=1}^n \ell_j (\htheta_n^{\text{lw}} (s))}
    -\frac{1}{n}\right)
    \ell_{i^-_{\text{max}}} (\htheta_n^{\text{lw}} (s)) \nonumber \\ 
    &= \sum_{i\in I^+}
    \left(\frac{\ell_i (\htheta_n^{\text{lw}} (s))}
    {\sum_{j=1}^n \ell_j (\htheta_n^{\text{lw}} (s))}
    -\frac{1}{n}\right)
    \left(\ell_{i^-_{\text{max}}} (\htheta_n^{\text{lw}} (s))+\delta(s)\right)
    + \sum_{i\in I^-} 
    \left(\frac{\ell_i (\htheta_n^{\text{lw}} (s))}
    {\sum_{j=1}^n \ell_j (\htheta_n^{\text{lw}} (s))}
    -\frac{1}{n}\right)
    \ell_{i^-_{\text{max}}} (\htheta_n^{\text{lw}} (s)) \nonumber \\
    &= \ell_{i^-_{\text{max}}} (\htheta_n^{\text{lw}} (s)) 
    \sum_{i=1}^n 
    \left(\frac{\ell_i (\htheta_n^{\text{lw}} (s))}
    {\sum_{j=1}^n \ell_j (\htheta_n^{\text{lw}} (s))}
    -\frac{1}{n}\right) 
    + \delta(s) \sum_{i\in I^+}
    \left(\frac{\ell_i (\htheta_n^{\text{lw}} (s))}
    {\sum_{j=1}^n \ell_j (\htheta_n^{\text{lw}} (s))}
    -\frac{1}{n}\right) \nonumber \\
    &= \ell_{i^-_{\text{max}}} (\htheta_n^{\text{lw}} (s)) (1-1) 
    + \Delta(s)=\Delta(s), 
\end{align}
where $\Delta(s):=\delta(s) \sum_{i\in I^+}\left(\frac{\ell_i (\htheta_n^{\text{lw}} (s))}{\sum_{j=1}^n \ell_j (\htheta_n^{\text{lw}} (s))}-\frac{1}{n}\right)>0$ for any $s \ge 0$. 
By continuity, $T_1\big|\big._{\vtheta}\ge \Delta(s)/2 >0$ also holds for any $\vtheta \approx \vtheta^*$. 

(ii) $T_2$: It measures the difference between losses under the standard and loss-weighted gradient flow. 

Combining (\Cref{eq:lwgf_convgc_pre}), (\Cref{eq:gf_lwgf_rltn}) with (\Cref{eq:T1>0}) yields that 
\begin{align}\label{eq:lwgf_convgc_mid}
    \frac{\mathrm{d}}{\mathrm{d}s} \|\htheta_n^{\text{lw}} (s)-\vtheta^*\|_2^2 
    &\le 2\left[\frac{1}{n} \sum_{i=1}^n \left(\ell_i (\vtheta^*)-\ell_i (\htheta_n (t))\right)-T_1\big|\big._{\vtheta^=\vtheta^*}+T_2\right] \nonumber \\
    &\le 2 \left[\left(\hat{L}_n(\vtheta^*)-\hat{L}_n(\htheta_n (t))\right)
    -\Delta(s)
    +\left(\hat{L}_n (\htheta_n (t))
    -\hat{L}_n (\htheta_n ^{\text{lw}}(s))\right)\right] \nonumber \\
    &= 2 \left[\left(\hat{L}_n(\vtheta^*)-\hat{L}_n (\htheta_n ^{\text{lw}}(s))\right)
    -\Delta(s)\right], 
\end{align}
which gives 
\begin{align}
    \hat{L}_n (\htheta_n ^{\text{lw}}(s))
    -\hat{L}_n(\vtheta^*) 
    &\le -\frac{1}{2} 
    \frac{\mathrm{d}}{\mathrm{d}s} \|\htheta_n^{\text{lw}} (s)-\vtheta^*\|_2^2 - \Delta(s) \\ 
    \Rightarrow 
    \int_{s_1}^{s_2} \hat{L}_n (\htheta_n ^{\text{lw}}(s)) \mathrm{d}s
    -(s_2-s_1) \cdot \hat{L}_n(\vtheta^*) 
    &\le -\frac{1}{2}  
    \left(\|\htheta_n^{\text{lw}} (s_2)-\vtheta^*\|_2^2
    -\|\htheta_n^{\text{lw}} (s_1)-\vtheta^*\|_2^2\right) 
    - \int_{s_1}^{s_2} \Delta(s) \mathrm{d}s \nonumber \\ 
    &\le \frac{1}{2} \|\htheta_n^{\text{lw}} (s_1)-\vtheta^*\|_2^2 - \int_{s_1}^{s_2} \Delta(s) \mathrm{d}s 
\end{align}
for any $s_2> s_1\ge 0$. 
That is 
\begin{align*}
    \frac{1}{s_2-s_1}\int_{s_1}^{s_2} \hat{L}_n (\htheta_n ^{\text{lw}}(s)) \mathrm{d}s 
    -\hat{L}_n(\vtheta^*) 
    \le \frac{1}{2(s_2-s_1)} \|\htheta_n^{\text{lw}} (s_1)-\vtheta^*\|_2^2 
    - \frac{1}{s_2-s_1} \int_{s_1}^{s_2} \Delta(s) \mathrm{d}s,  
\end{align*} 
or for any $s>0$,
\begin{align}
    \frac{1}{s}\int_{0}^{s} \hat{L}_n (\htheta_n ^{\text{lw}}(s')) \mathrm{d}s' 
    -\hat{L}_n(\vtheta^*) 
    &\le \frac{1}{2s} \|\vtheta_0-\vtheta^*\|_2^2 
    - \frac{1}{s} \int_{0}^{s} \Delta(s') \mathrm{d}s' \nonumber \\
    &< \frac{1}{2s} \|\vtheta_0-\vtheta^*\|_2^2,  
\end{align}
which proves (\Cref{eq:lwgf_rate}) by further applying the convexity of $\hat{L}_n(\cdot)$\footnote{That is, by Jensen's inequality, we have $\frac{1}{s}\int_{0}^{s} \hat{L}_n (\htheta_n ^{\text{lw}}(s')) ds'\ge \hat{L}_n (\frac{1}{s}\int_{0}^{s}\htheta_n ^{\text{lw}}(s') ds')$, and there exists $s_0\in[0,s]$ such that $\htheta_n ^{\text{lw}}(s_0)=\frac{1}{s}\int_{0}^{s}\htheta_n ^{\text{lw}}(s')ds'$.}. 
Recall that (\Cref{eq:gf_convgc_pre}) can be rewritten as 
\begin{align}\label{eq:gf_convgc_pre_L}
    \frac{\mathrm{d}}{\mathrm{d}t} \|\htheta_n (t)-\vtheta^*\|_2^2
    \le 2 
    \left(\hat{L}_n (\vtheta^*)-\hat{L}_n (\htheta_n (t))\right). 
\end{align}
Compared with (\Cref{eq:lwgf_convgc_mid}), for any $s, t \ge 0$ such that $\hat{L}_n (\htheta_n (t))=\hat{L}_n (\htheta_n ^{\text{lw}}(s))$, 
we have (\Cref{eq:lwgf_convgc_mid})'s RHS $<$ (\Cref{eq:gf_convgc_pre_L})'s RHS $=-2\hat{L}_n (\htheta_n (t)) \le 0$, 
which implies a sharper convergence bound of the loss-weighted gradient flow (at the same loss level with the standard gradient flow). 
The proof is completed. 
\end{proof}

Proposition \Cref{prop:lwgf>gf_formal} suggests that, under certain regularity conditions, the loss-weighted gradient flow converges more than sub-linearly to the global minimum, while the standard gradient flow (i.e the continuous-time idealization of vanilla GD) only has the sub-linear convergence. 
In addition, at the same loss level, the convergence bound of loss-weighted gradient flow is sharper than that of standard gradient flow. 
This theoretical characterization, together with practical simulations (e.g., Table 1, 3 and Figure 3, 4 in \cite{pmlr-v108-kawaguchi20a}), fundamentally gives chances to potential learning accelerations by leveraging the loss information in the gradient-based training dynamics.

\subsection{Proof of Proposition \ref{prop:2-ema_dif}}\label{app:2-ema_dif}

\begin{proof}
    Define $\vw(t):=[w_i(t)]_{i \in [n]}$, $\vs(t):=[s_i(t)]_{i \in [n]}$, and $\vl(t):=[\ell_i(\vtheta(t))]_{i \in [n]}$ for any $t \in \bbn$. 
    The sampling scheme (\Cref{eq:ema2}) can be rewritten as 
    \begin{equation}\label{eq:ema2_vec}
        \begin{split}
            \vw(t) 
            &= \beta_1 \vs(t-1)+(1-\beta_1)\vl(t), \\
            \vs(t) &= \beta_2 \vs(t-1)+(1-\beta_2)\vl(t), 
            \quad \vs(0) = \bm{1}/n.  
        \end{split}
    \end{equation}
    In (\Cref{eq:ema2_vec}), let the first equation minus the second, we get
    \begin{align}\label{eq:res_w_s}
        \vw(t)-\vs(t)
        = (\beta_2-\beta_1)(\vl(t)-\vs(t-1)). 
    \end{align}
    The second equation gives 
    \begin{align}\label{eq:dif_s}
        \vs(t)-\vs(t-1)
        = (1-\beta_2)(\vl(t)-\vs(t-1)). 
    \end{align}
    Combining (\Cref{eq:res_w_s}) with (\Cref{eq:dif_s}), we have 
    \begin{align}\label{eq:w_s_sdif}
        \vw(t)=\vs(t)+\frac{\beta_2-\beta_1}{1-\beta_2} (\vs(t)-\vs(t-1)),
    \end{align}
    which proves the first equality. 

    Expanding the second equation, by induction we get 
    \begin{equation}\label{eq:score}
       \vs(t)
       =\beta_2^t \vs(0) 
       + (1-\beta_2) \sum_{k=1}^t \beta_2^{t-k} \vl(k), 
    \end{equation}
    hence 
    \begin{align}
        \vs(t)-\vs(t-1)
        &= \beta_2^{t-1} (\beta_2-1) \vs(0) 
        + (1-\beta_2) 
        \left[
        \sum_{k=1}^t \beta_2^{t-k} \vl(k)
        -\sum_{k=1}^{t-1} \beta_2^{t-1-k} \vl(k)\right] \nonumber \\
        &=-(1-\beta_2)\beta_2^{t-1} \vs(0) 
        + (1-\beta_2) 
        \left[
        \beta_2^{t-1} \vl(1)+
        \sum_{k=2}^t \beta_2^{t-k} \vl(k)
        -\sum_{k=1}^{t-1} \beta_2^{t-1-k} \vl(k)\right] \nonumber \\
        &=-(1-\beta_2)\beta_2^{t-1} \vs(0) 
        + (1-\beta_2) \left[
        \beta_2^{t-1} \vl(1)+
        \sum_{k=1}^{t-1} \beta_2^{t-1-k} 
        (\vl(k+1)-\vl(k))\right] \nonumber \\
        &\approx (1-\beta_2)
        \sum_{k=1}^{t-1} \beta_2^{t-1-k} 
        (\vl(k+1)-\vl(k)) \label{eq:sdif}
    \end{align}
    for relatively large $t$, 
    and the approximation error is exponentially small (due to $\lim_{t\to +\infty}\beta_2^{t}=0$ for any $\beta_2 \in (0,1)$). 
    Combining (\Cref{eq:w_s_sdif}), (\Cref{eq:score}) and (\Cref{eq:sdif}) yields (\Cref{eq:conv_expand}), and the proof is completed. 
\end{proof} 

\subsection{Proof of Theorem \ref{thm:freq}}\label{app:freq} 

\begin{proof}
    Consider a continuous-time idealization of the sampling scheme \Cref{eq:ema2}: 
    \begin{align}
        s(t) = \beta_2 s(t-1)+(1-\beta_2)\ell(t) 
        &\Rightarrow
        s(t)-s(t-1) = (1-\beta_2)(\ell(t)-s(t-1)) \label{eq:s_dif_tem} \\
        &\Rightarrow
        s'(t) = (1-\beta_2)(\ell(t)-s(t)), \label{eq:s_dif}
    \end{align}
    with $\ell(t):=\ell(\vtheta(t))$, and $\beta_2 \ne 0$. 
    Similarly, 
    \begin{align}\label{eq:w_dif}
        w(t) = \beta_1 s(t-1)+(1-\beta_1)\ell(t) 
        &\Rightarrow 
        w(t)-s(t) = (\beta_2-\beta_1)(\ell(t)-s(t-1)) \nonumber \\ 
        &\Rightarrow 
        w(t) = s(t)+(\beta_2-\beta_1)\frac{s(t)-s(t-1)}{1-\beta_2} \qquad \text{(by (\Cref{eq:s_dif_tem}))} \nonumber \\ 
        &\Rightarrow w(t) = s(t)+\frac{\beta_2-\beta_1}{1-\beta_2} s'(t) \nonumber \\ 
        &\Rightarrow w(t) = (\beta_2-\beta_1)\ell(t)+(1-\beta_2+\beta_1)s(t). \quad \text{(by (\Cref{eq:s_dif}))}  
    \end{align}
    Since $\mathcal{L}\{\cdot\}$ is linear and satisfies $\mathcal{L}\{f'\}(\omega)=\omega\mathcal{L}\{f\}(\omega)-f(0)$, 
    we have 
    \begin{align}
        \text{(\Cref{eq:s_dif})} 
        &\Rightarrow 
        \mathcal{L}\{s'\}(\omega) 
        = (1-\beta_2)(\mathcal{L}\{\ell\}(\omega)-\mathcal{L}\{s\}(\omega)) 
        = \omega\mathcal{L}\{s\}(\omega)-s(0) \nonumber \\ 
        &\Rightarrow 
        \mathcal{L}\{s\}(\omega) 
        = \frac{1-\beta_2}{\omega+(1-\beta_2)}\mathcal{L}\{\ell\}(\omega) + \frac{s(0)}{\omega+(1-\beta_2)}, \label{eq:s_lap}
    \end{align}
    and 
    \begin{align}
        \text{(\Cref{eq:w_dif})} 
        \Rightarrow 
        \mathcal{L}\{w\}(\omega) &= (\beta_2-\beta_1)\mathcal{L}\{\ell\}(\omega)+(1-\beta_2+\beta_1)\mathcal{L}\{s\}(\omega) \nonumber \\
        &= \frac{(\beta_2-\beta_1)\omega+(1-\beta_2)}{\omega+(1-\beta_2)}\mathcal{L}\{\ell\}(\omega) + \frac{(1-\beta_2+\beta_1)}{\omega+(1-\beta_2)} s(0), \quad \text{(by (\Cref{eq:s_lap}))} \\ 
        &= \frac{(\beta_2-\beta_1)\omega+(1-\beta_2)}{\omega+(1-\beta_2)}\mathcal{L}\{\ell\}(\omega) + \mathcal{L}\big\{(1-\beta_2+\beta_1)s(0)\cdot e^{-(1-\beta_2)t}\big\}(\omega) \nonumber \\ 
        &= \frac{(\beta_2-\beta_1)\omega+(1-\beta_2)}{\omega+(1-\beta_2)}\mathcal{L}\{\ell\}(\omega)+O(1/n). 
        \quad (\text{recall $s(0) = 1/n$})
    \end{align}
    Then, the transfer function is $H(\omega)=\frac{(\beta_2-\beta_1)\omega+(1-\beta_2)}{\omega+(1-\beta_2)}$, with  
    \begin{align}
        |H(\mathrm{i}\omega_0)|
        = \left|\frac{(\beta_2-\beta_1)\mathrm{i}\omega_0+(1-\beta_2)}{\mathrm{i}\omega_0+(1-\beta_2)}\right|
        = \sqrt{\frac{(\beta_2-\beta_1)^2\omega_0^2+(1-\beta_2)^2}{\omega_0^2+(1-\beta_2)^2}},  
    \end{align} 
    and 
    \begin{align}
        |H(\mathrm{i}\omega_0)|\le 1, \quad 
        \lim_{\omega_0\to+\infty} |H(\mathrm{i}\omega_0)| = |\beta_2-\beta_1|.
    \end{align} 
    The proof is completed. 
\end{proof}

\subsection{ES to Solve a DRO Problem}

From another perspective, ES can be also reformulated as a solution to a distributionally robust optimization (DRO) problem, or more specifically the minimax optimization problem  
\begin{equation}\label{eq:minmax}
    \min_{\vtheta\in\mtheta} \max_{\vp \in \Delta^n} 
        L_n(\vtheta; \vp):= \sum_{i=1}^n p_i (\ell_i(\vtheta) - \ell_i^{\text{ref}}),
\end{equation}
where $\Delta^n$ denotes the ($n-1$)-dimensional probability simplex.
This objective leads to a stronger requirement for robust performances on both typical and rare samples compared to the regular ERM (\cite{pmlr-v48-shalev-shwartzb16}).
Different from traditional DRO, \Cref{eq:minmax} introduces a reference loss $\ell_i^{\text{ref}}$, with the excess loss $\ell_i(\vtheta) - \ell_i^{\text{ref}}$ measuring the improvement of the model on the $i$-th sample with respect to a  reference model (typically \emph{pre-trained}; see e.g. \cite{oren-etal-2019-distributionally, NEURIPS2023_dcba6be9, pmlr-v162-mindermann22a}). 
The second advantage of ES is to naturally leverage losses of historical models along the training dynamics as a proxy of the reference loss $\ell_i^{\text{ref}}$ in \Cref{eq:minmax}, which can be continuously updated without explicitly (pre-)training additional models. 

Specifically, we have the following proposition. 

\begin{proposition}\label{prop:minmax}
    Consider to solve the minimax objective \Cref{eq:minmax} via gradient ascent-descent 
    \begin{equation}\label{eq:gda}
        \left\{
            \begin{array}{l}
                \vp(t)\propto \vw(t) := \vw(t-1) + (1-\beta_1)(\bm{\ell}(\vtheta(t))-\bm{\ell}^{\text{ref}}(\vtheta(1:t-1))), \\
                \vtheta(t+1) := \vtheta(t) - \eta_t^\vtheta\sum_{i=1}^n p_i(t) \nabla_{\vtheta} \ell_i(\vtheta(t)),
            \end{array}
        \right.
    \end{equation}
    where the reference loss is defined as $\bm{\ell}^{\text{ref}}(\vtheta(1:t)):=[\ell_i^{\text{ref}}(\vtheta(1:t))]_{i \in [n]}$ with  
        $\ell_i^{\text{ref}}(\vtheta(1:t)):= 
        \frac{1-2\beta_1+\beta_1\beta_2}{1-\beta_1}\ell_i(\vtheta(t)) 
        + \frac{\beta_1(1-\beta_2)^2}{1-\beta_1}\sum_{k=1}^{t-1}\beta_2^{t-1-k}\ell_i(\vtheta(k)) 
        + \frac{\beta_1(1-\beta_2)\beta_2^{t-1}}{n(1-\beta_1)}$, $i \in [n]$. 
    Then, the dynamics \Cref{eq:gda} is consistent with gradient descent sampled with the sampling scheme \Cref{eq:ema2}. 
\end{proposition}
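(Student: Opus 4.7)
The plan is to verify by direct algebraic manipulation that the $\vw$-recurrence in \Cref{eq:gda}, together with the prescribed form of $\bm{\ell}^{\text{ref}}(\vtheta(1:t-1))$, reproduces exactly the ES update \Cref{eq:ema2}. Once this equivalence is established coordinate-wise, the $\vtheta$-step in \Cref{eq:gda} is by construction gradient descent weighted by $p_i(t)\propto w_i(t)$, so the two dynamics coincide. The form of the reference loss is really reverse-engineered to make this match, and the proof is essentially bookkeeping.

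First I would subtract two consecutive iterates of \Cref{eq:ema2} to obtain
\begin{equation*}
w_i(t) - w_i(t-1) = \beta_1\bigl(s_i(t-1)-s_i(t-2)\bigr) + (1-\beta_1)\bigl(\ell_i(\vtheta(t))-\ell_i(\vtheta(t-1))\bigr),
\end{equation*}
and then substitute the identity $s_i(t-1)-s_i(t-2) = (1-\beta_2)\bigl(\ell_i(\vtheta(t-1))-s_i(t-2)\bigr)$ that comes from the second line of \Cref{eq:ema2}. Dividing by $1-\beta_1$ and isolating $\ell_i(\vtheta(t))$ should yield
\begin{equation*}
\ell_i(\vtheta(t)) - \frac{w_i(t)-w_i(t-1)}{1-\beta_1} = \frac{1-2\beta_1+\beta_1\beta_2}{1-\beta_1}\,\ell_i(\vtheta(t-1)) + \frac{\beta_1(1-\beta_2)}{1-\beta_1}\,s_i(t-2).
\end{equation*}

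Next I would expand $s_i(t-2)$ using the closed form already derived in the proof of Proposition \ref{prop:2-ema_dif} (see \Cref{eq:score}), namely $s_i(t-2) = \beta_2^{t-2}/n + (1-\beta_2)\sum_{k=1}^{t-2}\beta_2^{t-2-k}\ell_i(\vtheta(k))$. Plugging this into the previous display and regrouping the coefficients in front of the tail sum and the $\beta_2^{t-2}/n$ term should reveal that the right-hand side matches, term-for-term, the formula defining $\ell_i^{\text{ref}}(\vtheta(1:t-1))$ in the statement. This closes the loop: the $\vw$-update of \Cref{eq:gda} is identical to the ES recurrence \Cref{eq:ema2}, and the $\vtheta$-update is then gradient descent under the induced sampling probabilities.

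The main obstacle is simply careful coefficient tracking between the three shifted time indices $t$, $t-1$, $t-2$, together with verifying that the small $O(1/n)$-type term $\beta_1(1-\beta_2)\beta_2^{t-1}/(n(1-\beta_1))$ (which arises from the initialization $s_i(0)=1/n$) carries through correctly; there is no conceptual difficulty. I would also append a brief remark motivating the construction, noting that $\nabla_{\vp} L_n(\vtheta;\vp) = \bm{\ell}(\vtheta)-\bm{\ell}^{\text{ref}}$ in \Cref{eq:minmax}, so the $\vw$-update in \Cref{eq:gda} is an unconstrained gradient-ascent step on the inner maximization with step-size $1-\beta_1$, followed by a simplex renormalization $\vp(t)\propto\vw(t)$; this is what makes ES naturally interpretable as a DRO procedure with a running reference model.
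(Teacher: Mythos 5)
Your proposal is correct and follows essentially the same route as the paper's proof: both take the difference of consecutive ES weights from \Cref{eq:ema2}, substitute the $s$-recursion together with its closed form \Cref{eq:score}, and identify the resulting expression with the reference loss so that the $\vw$-update of \Cref{eq:gda} is a gradient-ascent step with step size $1-\beta_1$. The only differences are cosmetic (a shift of the time index and isolating $\ell_i^{\text{ref}}$ explicitly rather than defining it to match), and your coefficient $\frac{1-2\beta_1+\beta_1\beta_2}{1-\beta_1}$ and the $\beta_2^{t-2}/n$ initialization term both check out.
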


\begin{proof}
    The problem (\Cref{eq:minmax}) can be solved in an alternative gradient descent-ascent manner: 
    \begin{equation}\label{eq:alter_gda}
        \begin{split}
            \vtheta(t+1) 
            &= \vtheta(t) 
            - \eta^{\vtheta}_t \sum_{i=1}^n 
            p_{i}(t) \nabla_{\vtheta}\ell_i(\vtheta(t)), \\
            w_{i}(t+1) &= w_{i}(t) + \eta^{\vw}_t(\ell_i(\vtheta(t+1))-\ell^{\text{ref}}_i), \quad
            p_{i}(t) = \frac{w_{i}(t)}{\sum_j w_{j}(t)}.
        \end{split}
    \end{equation}
    The sampling scheme (\Cref{eq:ema2}) updates the weights as 
    \begin{equation}
        w_{i}(t+1) 
        = w_{i}(t) 
        + (1-\beta_1)
        (\ell_i(\vtheta(t+1))-\ell_i(\vtheta(t))) 
        + \beta_1(s_{i}(t)-s_{i}(t-1)). 
    \end{equation}
    By (\Cref{eq:score}), we get
    \begin{align*}
        s_{i}(t)-s_{i}(t-1)
        =-(1-\beta_2)\beta_2^{t-1}s_{i}(0)
        -(1-\beta_2)^2 \sum_{k=1}^{t-1} \beta_2^{t-1-k} \ell_i(\vtheta(k)) + (1-\beta_2) \ell_i(\vtheta(t)), 
    \end{align*}
    hence
    \begin{align}
        w_{i}(t+1) 
        &= w_{i}(t) 
        + (1-\beta_1)
        (\ell_i(\vtheta(t+1))-\ell_i(\vtheta(t))) 
        -\beta_1(1-\beta_2)\beta_2^{t-1}s_{i}(0) \nonumber \\
        &\quad -\beta_1(1-\beta_2)^2 \sum_{k=1}^{t-1} \beta_2^{t-1-k} \ell_i(\vtheta(k)) 
        + \beta_1(1-\beta_2) \ell_i(\vtheta(t)). 
    \end{align}
    Let 
    \begin{equation}
        \ell_i^{\text{ref}} 
        = \frac{1-2\beta_1+\beta_1\beta_2}{1-\beta_1}\ell_i(\vtheta(t))
        + \frac{\beta_1(1-\beta_2)^2}{1-\beta_1}\sum_{k=1}^{t-1}\beta_2^{t-1-k}\ell_i(\vtheta(k)) 
        + \frac{\beta_1(1-\beta_2)\beta_2^{t-1}}{1-\beta_1}s_{i}(0), 
    \end{equation}
    then we have 
    \begin{equation}
        w_{i}(t+1) = w_{i}(t) + (1-\beta_1)(\ell_i(\vtheta(t+1))-\ell_i^{\text{ref}}),
    \end{equation}
    which coincides with the update formula \Cref{eq:alter_gda} with $\eta_t^{\vw}=1-\beta_1$. 
    The proof is completed. 
\end{proof}

\section{More Details of Algorithms}
\label{app:sec:alg}

This section presents more details of the ES(WP) sampling framework.  

\paragraph{Annealing (optional)}

Notably, similar to the loss-weighted sampling scheme (\Cref{eq:ema0_weight}) and its further variants, the sampling scheme (\Cref{eq:ema2}) also assigns different weights on the respective gradient of data samples, leading to a biased estimation on the true gradient $\nabla_{\vtheta} \hat{L}_n (\cdot)$ (with uniform individual  weights). 
Inspired by \cite{qin2024infobatch}, we adopt the \emph{annealing} strategy, to perform normal training (with the standard batched sampling, no data selection) at the last few epochs. 
Besides, to get better initializations of the weights $\{w_i(\cdot)\}_{i\in[n]}$, we also apply the annealing strategy at the first few epochs. 

Combining the sampling scheme (\Cref{eq:ema2}) with the annealing strategy, we obtain the \textbf{Evolved Sampling} (\textbf{ES}) framework (formalized in Algorithm \Cref{alg:ES}). 

\begin{algorithm}[ht]
    \caption{\textbf{E}volved \textbf{S}ampling (\textbf{W}ith \textbf{P}runing)}
    \label{alg:ES}
    \begin{algorithmic}
        \REQUIRE{Dataset $\mathcal{D}=\{\vz_i\}_{i=1}^n$, \text{optimizer} (e.g. Adam) 
        }
        \REQUIRE{Pruning ratio $r$, meta-batch size $B$, mini-batch size $b\le B$, total epochs $E$, annealing epochs $(E_{a_\text{start}}, E_{a_\text{end}})$, hyper-parameters $\beta_1,\beta_2\in (0,1)$}
        \STATE{
            Initialize the scores/weights  $\vs(0)=\vw(0)=\frac{1}{n}\bm{1}_n$, $t=0$ 
        }
        \FOR{$e = 0, 1, \cdots, E-1$}
            \IF{$E_{a_\text{start}}\leq e < E-E_{a_\text{end}}$}
                \STATE{
                    Sample a sub-dataset $\mathcal{D}_{e}$ ($|\mathcal{D}_{e}|=(1-r)|\mathcal{D}|$) from $\mathcal{D}$ without replacement, according to the probability $p'_i(e)\propto w_i(e)$ 
                } \hspace{-2.75mm}
                \COMMENT{\colorbox{Ocean}{\emph{pruning}}}
            \ELSE
                \STATE{
                    Set $\mathcal{D}_e=\mathcal{D}$
                }
            \ENDIF
            \FOR{$j=0, 1, \cdots, \lceil\frac{|\mathcal{D}_e|}{B}\rceil-1$}
                \STATE{
                    Sample a meta-batch $\mathcal{B}_t$ ($|\mathcal{B}_t|=B$) uniformly from $\mathcal{D}_e$ without replacement
                }
                \STATE{
                    Compute the loss $\ell_i(\vtheta(t))$ for $\vz_i\in \mathcal{B}_t$
                }
                \STATE{
                    Update score: 
                    $s_i(e+1) \leftarrow \beta_2 s_i(e)+(1-\beta_2)\ell_i(\vtheta(t))$ for $\vz_i\in \mathcal{B}_t$
                }
                \STATE{
                    Update the weight: 
                    $w_i(e+1) \leftarrow \beta_1 s_i(e)+(1-\beta_1)\ell_i(\vtheta(t))$ for $\vz_i\in \mathcal{B}_t$
                }
                \IF{$E_{a_\text{start}}\leq e < E-E_{a_\text{end}}$}
                    \STATE{
                        Sampling a mini-batch $\mathfrak{b}_t$ ($|\mathfrak{b}_t|=b$) from $\mathcal{B}_t$ without replacement, according to the probability $p_i(e+1)\propto w_i(e+1)$
                    }
                    \STATE{
                        Update model:  
                        $\vtheta(t+1) \leftarrow \text{optimizer}(\vtheta(t); \mathfrak{b}_t)$
                    }  
                \ELSE
                    \STATE{
                        Update model:  
                        $\vtheta(t+1) \leftarrow \text{optimizer}(\vtheta(t); \mathcal{B}_t)$
                    }  
                    \COMMENT{\colorbox{Ocean}{\emph{annealing}}}
                    
                \ENDIF
                \STATE{
                    $t\leftarrow t+1$
                }
            \ENDFOR
        \ENDFOR
    \end{algorithmic}
\end{algorithm}

\paragraph{Pruning (optional)}

Note that applying the sampling scheme (\Cref{eq:ema2}) to meta-batches (with the batch size $B$) in fact introduces data selection in a \emph{batch} level, since one can always select a smaller batch (with the batch size $b < B$) out of the meta-batch, according to the sampling probability $p_i(t)$ defined in (\Cref{eq:ema2}). 
For more aggressive data pruning and enhanced data efficiency, we can further extend ES by involving the \emph{set} level data selection. That is, randomly pruning the \emph{whole} dataset according to the probability proportional to the weights $\{w_i(e)\}_{i=1}^n$ at the beginning of the $e$-th epoch.  
This is formalized as \textbf{Evolved Sampling with Pruning} (\textbf{ESWP}) in Algorithm \Cref{alg:ES}.

\section{More Details of Experiments}
\label{app:sec:exp}

In this section, we present further experimental results and details. 
We run all the experiments with one NVIDIA A100 (80GB) with the mixed-precision training except the pre-training of ViT-Large on ImageNet-1K. 
All the algorithms are implemented based on PyTorch (\cite{paszke2019pytorch}) and Timm (\cite{wightman2019pytorch}). 
For InfoBatch, our implementation is adapted from \cite{qin2024infobatch}. 

\subsection{Illustrations on Synthetic Datasets}
\label{app:sec:exp_toy}

\begin{figure}[ht]
    \centering
    \includegraphics[width=0.275\linewidth]{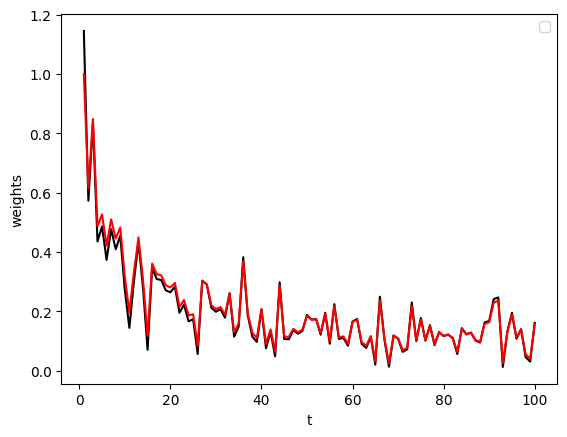}
    \includegraphics[width=0.275\linewidth]{figure/betas_.5_.9_pos.png}
    \includegraphics[width=0.275\linewidth]{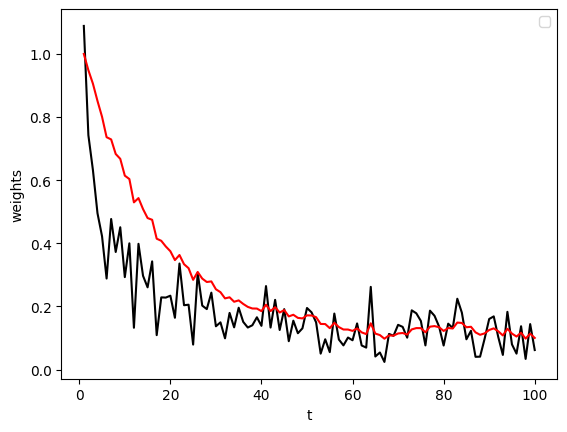}
    \caption{The output weights of different sampling schemes, 
    where the black curves denote (\Cref{eq:ema0_weight}), 
    while the red curves represent (\Cref{eq:ema2}) (from left to right: $\beta_1=0.1, 0.5, 0.8$, and $\beta_2\equiv 0.9$). 
    Here, we draw the black curve as a decayed function with random perturbations, to mimic typical behaviors of loss curves in general machine learning tasks. 
    It is shown that the sampling scheme (\Cref{eq:ema0_weight}) is usually sensitive w.r.t. oscillations. 
    However, when losses oscillate, the sampling scheme (\Cref{eq:ema2}) reacts moderately by not only reserving some portion of dynamical details of losses (high frequencies), but also remaining necessary robustness by capturing the overall trend (low frequencies), with the flexibility to trade off in between by tuning $(\beta_1, \beta_2)$.}
    \label{fig:2-ema_vs_1-ema_vs_0-ema_appndx}
\end{figure} 

\subsection{Selected Samples by ES(WP)}
\label{app:visual}

\begin{figure}[ht]
\label{fig:visual}
    \centering
    \includegraphics[width=0.825\linewidth]{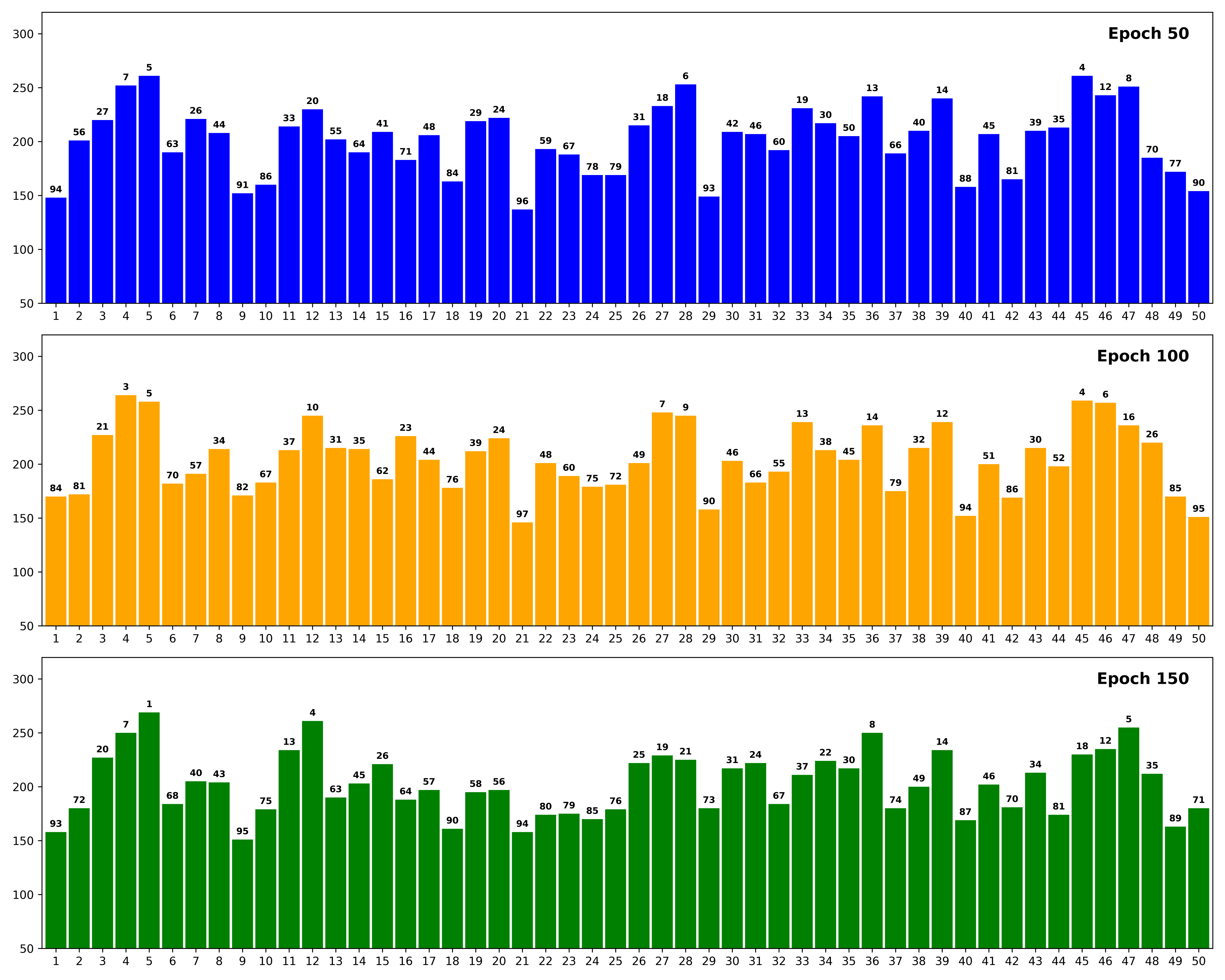}
    \caption{Visualization of the number of selected samples for BP of each class in ESWP (ResNet-50, Cifar-100), following Figure 6 in \cite{KA}. 
    Here, it shows the result of the first $50$ classes. 
    The number on top of each column shows the rank over $100$ classes (a lower rank indicates a higher number of selected samples). 
    It is shown that ES(WP) can automatically adjust selected samples at different training stages.}
\end{figure}

\begin{figure}[ht]
    \centering
    \subfigure{
        \includegraphics[clip, width=0.425\linewidth]{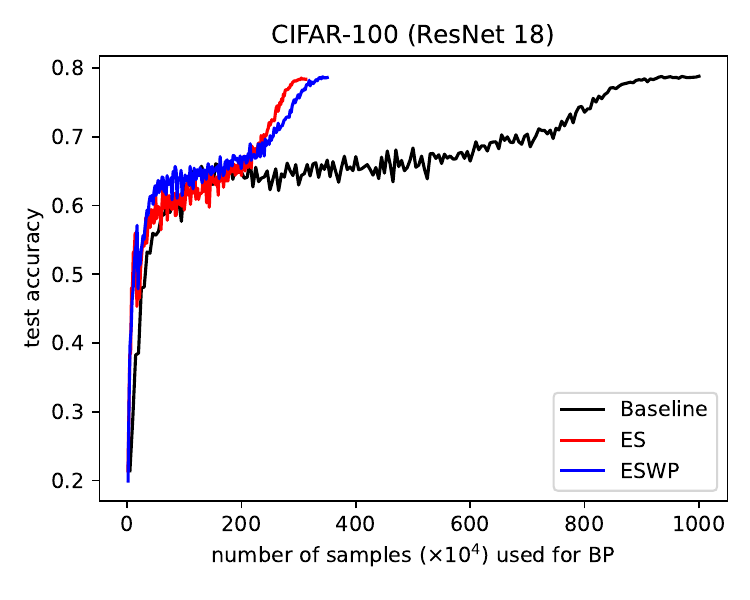}
        \label{fig:sample_vs_acc_r18}
    }
    \subfigure{
        \includegraphics[clip, width=0.425\linewidth]{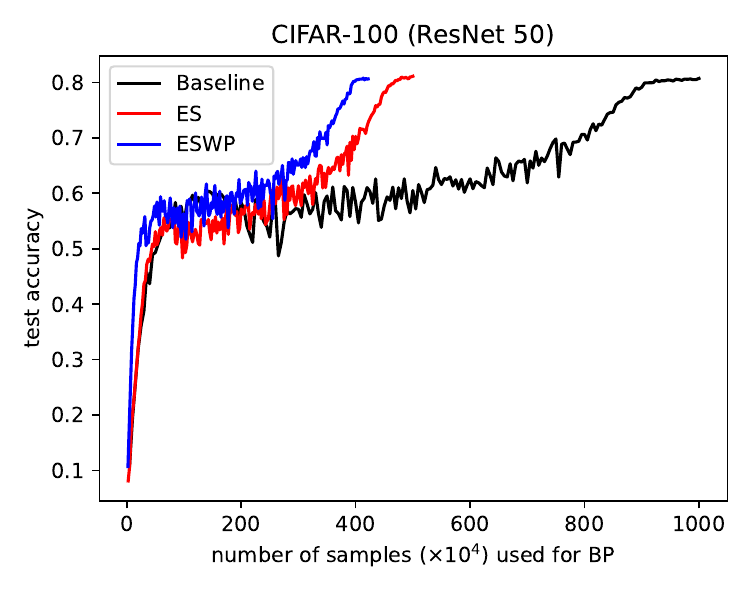}
        \label{fig:sample_vs_acc_r50}
    }
    \caption{Learning dynamics of different data selection methods:  
    Test accuracy versus the number of samples used for back propagations (BPs).}
    \label{fig:acc-vs-bp}
\end{figure}

\subsection{Experiments on CIFAR Datasets}
\label{appdx:cifar}

For computer vision (CV) tasks, we train ResNet-18/50 (R-18/50) models on CIFAR-10/100 datasets, using SGD for $200$ epochs, with $B=128/256$ for ResNet-18/50 ($b/B=50\%$ for ResNet-50). 

For the experiments on the CIFAR-10/100 datasets, we use the SGD optimizer with the momentum $0.9$ and weight decay $5\times 10^{-4}$. 
We apply the OneCycle scheduler (\cite{smith2019super}) with the cosine annealing.
For CIFAR-10, the maximal learning rate is $0.2$ for the baseline and \emph{set} level selection methods, while $0.05$ for \emph{batch} level selection methods due to larger variances of stochastic gradients and $0.08$ for ESWP. 
For CIAFR-100 trained with ResNet-18/50, the maximal learning rates for all the sampling methods are $0.05/0.2$, following \cite{qin2024infobatch}.

\subsection{Experiments of Full Fine-Tuning}
\label{app:fft}

\paragraph{Vision Transformer.}

We finetune ViT-Large model on ImageNet-1K with a meta-batch size $B=256$ for $10$ epochs, using the Adam optimizer with the OneCycle scheduler (\cite{smith2019super}) with the cosine annealing and a maximal learning rate of $2\times 10^{-5}$. 

\paragraph{ALBERT.} 
Following the setup in \cite{NEURIPS2023_6b9aa8f4} (Table 8), we use the AdamW optimizer and the polynomial decay scheduler with warm up.

\subsection{Experiments of Pre-Training}\label{app:subsec:mae}

We conduct the MAE-based pre-training of ViT-Large on ImageNet-1K using $4\times$A$100$ GPUs. 
Following the setup in \cite{he2022masked}, we train for $300$ epochs with a $40$-epoch warmup, base learning rate $1.5\times 10^{-4}$, weight decay $0.05$, and batch sizes $(B, b)=(256, 256)$ per GPU for ESWP, i.e., there is no batch level data selection. 
In our implementation, the sampling procedure  of ESWP is conducted by an additional round of synchronization. 

After pre-training, we fine-tune the model for $50$ epochs with a $5$-epoch warmup, using the standard batched sampling (no data selection) with the batch size $B=256$ per GPU. 

\subsection{Experiments on Fine-tuning Qwen}\label{app:subsec:sft}

\paragraph{Training Details}
We conduct experiments on a single A100 (40GB) GPU to investigate the low-resource regime. 
Our implementation builds upon the verl framework.\footnote{https://github.com/volcengine/verl} 
We set the batch sizes $B=32,b=b_{\text{micro}}=8$, and use the AdamW optimizer with a learning
rate of $1\times 10^{-5}$, 
which follows a cosine decay scheduler with a warm-up ratio of $0.1$.
We set the total epoch as $10$ and evaluate the model after 1K, 2K, and 4K training steps.

\paragraph{Evaluation Details}
The detailed breakdown of pass@1 results are shown in \Cref{tab:qwen_sft}. 
We use a temperature of $1.0$, top\_p=1, the default chat template and Chain-of-Thought (CoT) prompting for evaluation.

\begin{table}[ht]
\centering
\caption{Pass@1 accuracy on MATH500, AIME24, and Olympiad Bench under different training budgets.}
\label{tab:qwen_sft}
\setlength{\tabcolsep}{6pt}
\begin{tabular}{lcccc}
\toprule
Method (Steps, Time) & MATH500 & AIME24 & Olympiad Bench & Averaged \\
\midrule
Baseline (1K, 50min)  & $\bm{61.8}$ & $6.7$ & $26.2$ & $31.6$ \\
Baseline (2K, 100min) & $59.6$ & $\bm{10.0}$ & $27.7$ & $32.4$ \\
Baseline (4K, 200min) & $63.4$ & $13.3$ & $25.2$ & $34.0$ \\
\midrule
ESWP (1K, 26.5min)    & $\bm{61.8}$ & $\bm{10.0}$ & $\bm{27.4}$ & $\bm{33.1}$ \\
ESWP (2K, 53min)      & $\bm{65.2}$ & $\bm{10.0}$ & $\bm{28.6}$ & $\bm{34.6}$ \\
ESWP (4K, 106min)     & $\bm{65.6}$ & $\bm{16.7}$ & $\bm{32.1}$ & $\bm{38.1}$ \\
\bottomrule
\end{tabular}
\end{table}

\subsection{Comparison Methods: Default Hyper-Parameters}\label{app:sec:exp_other}

For all the other data selection methods, we also use their default hyper-parameters in original papers in our experiments. 
Therefore, the comparisons and evaluations are fair in terms of hyper-parameters. 
We list the default hyper-parameters of all the other data selection methods as follows: 
\begin{itemize}
    \item InfoBatch (\cite{qin2024infobatch}): pruning ratio $r=0.5$, annealing ratio $1-\delta=0.125$;  
    \item KAKURENBO (\cite{KA}): pruning ratio $r=0.3$, confidence threshold $\tau=0.7$;  
    \item UCB (\cite{raju2021accelerating}): pruning ratio $r=0.3$, decay parameter $\beta=0.8$, confidence bound $c=1$;  
    \item Loss (\cite{katharopoulos2017biased}), 
    Order (\cite{pmlr-v108-kawaguchi20a}): the same batch sizes as ES. 
\end{itemize}

\end{document}